\PassOptionsToPackage{table}{xcolor}
\documentclass{article} %
\usepackage[T1]{fontenc}  %
\usepackage{iclr2027_conference,times}

\usepackage{amsmath,amsfonts,bm}

\def\eqref#1{equation~\ref{#1}}

\def\1{\bm{1}}

\DeclareMathAlphabet{\mathsfit}{\encodingdefault}{\sfdefault}{m}{sl}
\SetMathAlphabet{\mathsfit}{bold}{\encodingdefault}{\sfdefault}{bx}{n}

\usepackage{xcolor}
\usepackage{url}
\usepackage{booktabs}
\usepackage{multirow}
\usepackage{graphicx}
\usepackage{tabularx}
\usepackage{placeins}
\usepackage{needspace}
\usepackage{amssymb}   %
\usepackage{amsthm}
\newtheorem{lemma}{Lemma}
\newtheorem{proposition}{Proposition}
\newtheorem{corollary}{Corollary}
\usepackage{float}
\newfloat{algorithm}{tbp}{loa}
\floatname{algorithm}{Algorithm}
\usepackage{hyperref}

\hypersetup{hidelinks}

\title{DivOPD: Spread Wide, Look Close for \\ Asynchronous On-Policy Distillation \\ of Multi-turn Agents}

\author{%
Hanyang Wang$^{1,2*}$, Zeyuan Liu$^{2}$, Zhengyu Chen$^{2}$, Jingqing Ruan$^{2}$, Chaoxu Pang$^{2}$, Zhongda Su$^{2}$, \\
\textbf{Wulin Xie$^{3}$, Zhizhao Zeng$^{2}$, Ke Zeng$^{2}$, Tianxiang Zhao$^{4}$} \\[4pt]
{\normalfont $^{1}$University of Chicago \quad $^{2}$Meituan LongCat Interaction Team} \\
{\normalfont $^{3}$University of the Chinese Academy of Sciences} \\
{\normalfont $^{4}$The Hong Kong University of Science and Technology (Guangzhou)}
}


\makeatletter
\renewcommand{\subsection}{\@startsection{subsection}{2}{\z@}%
  {-0.95ex plus -0.2ex minus -0.1ex}%
  {0.35ex plus 0.1ex}%
  {\normalsize\sc\raggedright}}
\makeatother

\newcommand{\bst}[1]{\textbf{\boldmath #1}}
\newcommand{\snd}[1]{\underline{#1}}
\newcommand{\na}{\textendash}

\definecolor{tableheader}{RGB}{231,238,246}
\definecolor{tablesection}{RGB}{248,249,250}
\definecolor{tableaccent}{RGB}{249,234,232}
\definecolor{tablebest}{RGB}{249,242,228}
\definecolor{tabledivider}{RGB}{114,121,130}
\newcommand{\tabhead}{\rowcolor{tableheader}}

\newcommand{\tbest}[1]{\cellcolor{tablebest}\textbf{\boldmath #1}}

\definecolor{lighttabrule}{RGB}{178,193,206}
\newcommand{\resetlighttabrules}{\arrayrulecolor{black}}
\newcommand{\lighttablestyle}{%
  \arrayrulecolor{lighttabrule}%
  \aftergroup\resetlighttabrules
  \setlength{\heavyrulewidth}{0.7pt}%
  \setlength{\lightrulewidth}{0.45pt}%
  \setlength{\cmidrulewidth}{0.35pt}%
  \renewcommand{\arraystretch}{1.16}%
}

\iclrfinalcopy
\begin{document}
\raggedbottom

\maketitle
{\renewcommand{\thefootnote}{\fnsymbol{footnote}}\footnotetext[1]{Work done during an internship at Meituan LongCat Interaction Team.}}
\lhead{DivOPD: Spread Wide, Look Close for Asynchronous On-Policy Distillation of Multi-turn Agents}
\thispagestyle{fancy}

\begin{abstract}
On-policy distillation (OPD) trains student agents through teacher supervision
on their own interactions with an environment. However, in asynchronous
multi-turn training, arrival-order batching can allow a few early or long
rollouts to dominate learner updates while other valid rollouts become stale
before being used, wasting already-generated experience. To address this
problem, we introduce \textbf{DivOPD}, a simple learner-side batch-selection
method that spreads a fixed turn budget across more rollouts and, within each
rollout, prioritizes turns with larger cumulative teacher--student
disagreement. Turns without usable teacher feedback are excluded. The per-turn
loss and optimizer remain fixed; selection only changes which student-visited
turns receive training weight. For no-progress rollouts, an optional extension
briefly hands control to the teacher before returning it to the student.
Across six teacher--student settings on the simulated ALFWorld, ScienceWorld, and WebShop benchmarks,
with 1.5B--7B students, DivOPD raises cross-setting mean peak success rate from
$77.4$ to $84.4$ and mean success over the last five evaluations from $71.5$ to
$78.6$. It reaches all reported setting-specific targets with geometric-mean
speedups of $1.84\times$ in training tokens and $1.87\times$ in learner GPU time
relative to vanilla OPD. Teacher
intervention further raises this last-five mean to $82.4$ while retaining about
$1.7\times$ learner-GPU speedup over vanilla OPD.
Code will be released at \url{https://github.com/HanyangWang0418-oss/DivOPD}.
\end{abstract}

\section{Introduction}
\label{sec:intro}

OPD trains student agents through teacher supervision
on their own interactions with an environment. It is a natural fit for agents
because the student acts while a frozen teacher scores the states the student
actually visits~\citep{lin2020autoregressive,agarwal2023gkd}. In contrast,
distillation on fixed teacher-generated sequences~\citep{hinton2015distilling,kim2016sequence}
does not directly train on deviations that emerge during interaction. OPD
follows the online imitation-learning principle of supervising learner-visited
states~\citep{ross2011reduction} and reduces the exposure mismatch between
training and inference~\citep{bengio2015scheduled}.

However, asynchronous multi-turn OPD creates a learner-side allocation
problem: generating valid experience does not ensure that the learner will use
it. Explorers produce variable-length rollouts while the learner trains on
queued turns; model weights synchronize periodically, and turns expire after a
bounded policy age. If learner batches are filled in arrival order, a few early
or long rollouts can occupy many slots while other valid rollouts become stale
before contributing a gradient. The system then wastes interactions and
teacher scores that it has already paid to generate.

This asynchronous execution is increasingly common as foundation-model
post-training targets agents that reason, call tools, and interact with
environments over long, variable-duration
trajectories~\citep{xiao2026mimo,ma2026mopd,kimiteam2026k3,gao2025beyond}.
Strict rollout--update barriers are inefficient because an entire update can
wait for slow generation, tool execution, or teacher scoring. Recent
large-scale systems therefore span fully asynchronous rollout and
learning~\citep{fu2025areal,hu2026dora}, asynchronous teacher prefill
overlapped with rollout generation~\citep{ma2026mopd}, and partial rollouts
that continue across training
iterations~\citep{xiao2026mimo,kimiteam2026k3,zhou2025april}. These designs
differ in synchronization, but they make experience freshness a key
constraint. AsyncOPD exposes the same trade-off directly for distillation:
separating rollout generation from learning improves throughput but creates
experience from older policies~\citep{kang2026asyncopd}.

Unlike single-turn OPD, where each prompt produces one self-contained
completion, agentic OPD produces a multi-turn rollout: every
action changes the environment state and therefore the
observations and decisions available at later turns. Rollouts also vary in the
number and latency of model calls, tool executions, and environment steps.
As a result, their turns arrive as correlated, variable-length groups tied to
the same policy version, rather than as unrelated single-turn examples.

Vanilla OPD does not account for this grouped structure: it flattens each
multi-turn rollout into individual rows and fills learner batches in arrival
order, discarding the rollout as the natural allocation unit. Under a bounded
staleness window, batch construction therefore becomes an online allocation
decision that determines which rollouts receive any gradient before their
turns expire. It can also spend scarce slots on rows without a usable teacher
score.

Recent agentic OPD methods primarily improve experience before it reaches the
learner. They control rollout depth with staged schedules, early stopping, or probes;
rebalance supervision across turns; or alter visited states through scheduled
teacher intervention~\citep{wang2026tcod,ziheng2026less,zhou2026turnopd,li2026policy,chen2026look}.
These mechanisms address what experience is generated or how each turn is
supervised, often through rollout- or task-dependent schedules. They do not
directly address the separate learner-side question: how should a fixed batch
budget be allocated across already-generated, correlated rollouts before they
become stale? Better supervision cannot help a rollout that never reaches an
optimizer update.

Other asynchronous OPD methods correct, filter, or assign less weight to stale
samples~\citep{kang2026asyncopd,rang2026near,chen2026boldsymbol}. Open-MOPD
identifies a related token-budget imbalance across domains and refreshes stale
rewards across repeated inner updates~\citep{gao2026open}. These methods
control domain balance, sample freshness, or policy drift; they do not model how
correlated turns from multiple agent rollouts within one learner queue compete
for a fixed batch. Rollout-level allocation therefore has not been addressed even
when staleness is bounded.

Our training logs show that this allocation failure is large. Vanilla
OPD batches draw only $3.9$--$8.3$ effective rollouts, their largest rollout
supplies $24$--$40\%$ of batch tokens, and shared-buffer replays find that
$49$--$59\%$ of valid rollouts never enter an optimizer batch
(Figure~\ref{fig:clustering}, Appendix Table~\ref{tab:utilization-full}). We therefore
introduce \textbf{DivOPD}, which treats learner batch construction as online
allocation over correlated, expiring rollouts rather than flat sampling over
turns (Figure~\ref{fig:concept}). It removes turns without usable supervision,
limits each rollout's first-pass contribution, and selects high-disagreement
turns within each rollout. The last two steps serve different roles. Coverage
determines how many slots each rollout receives, while focus chooses which
turns fill those slots. The
selector reuses cached scores without an extra model pass and leaves vanilla
OPD's rollout generation, per-turn loss, and optimizer unchanged.

\begin{figure}[ht]
\centering
\includegraphics[width=\linewidth]{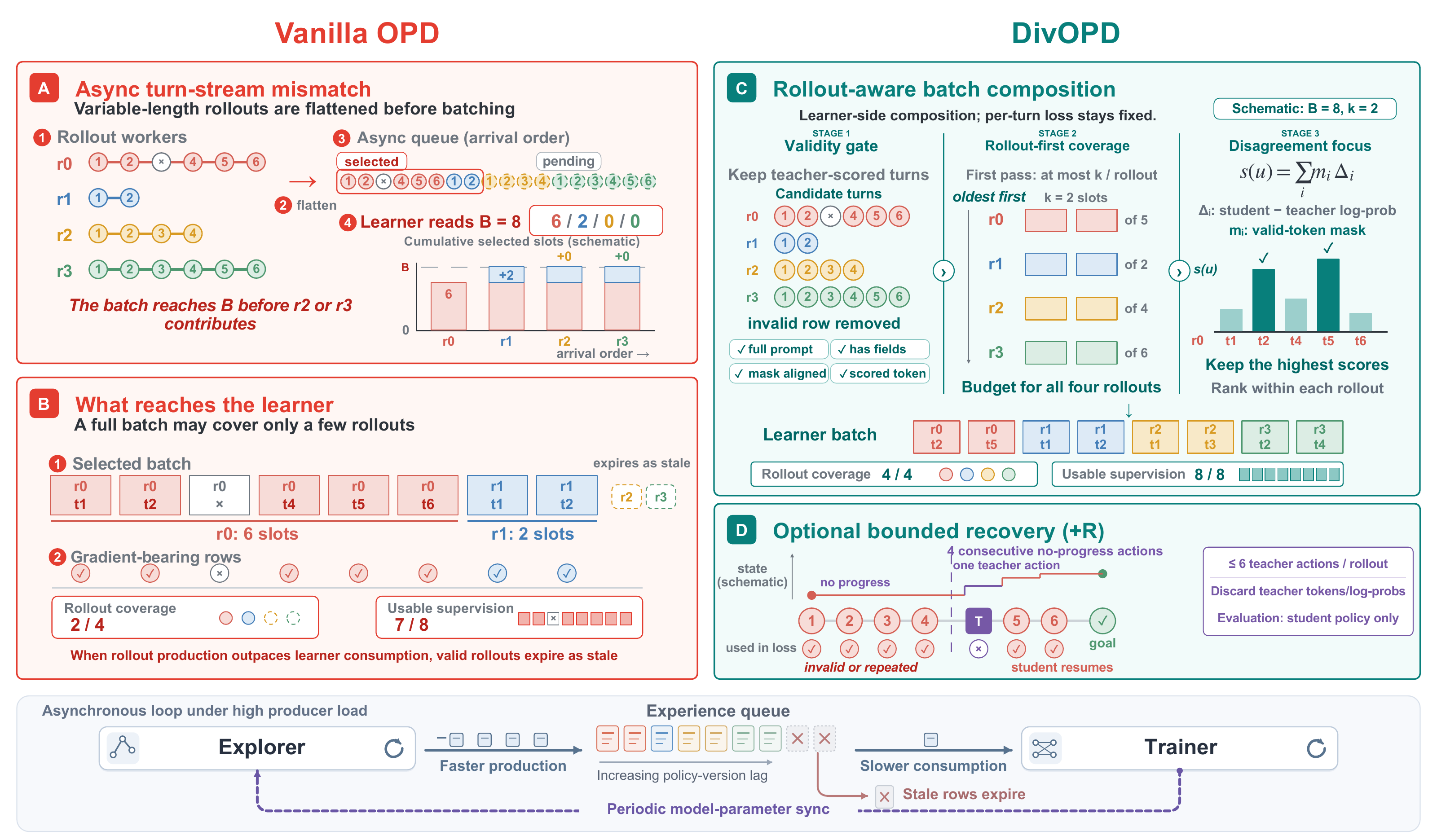}
\caption{\textbf{Batch composition in asynchronous multi-turn OPD.} (A) Rollout
workers enqueue variable-length interactions in arrival order. (B) When
production outpaces learner consumption, early or long rollouts can fill the
batch, include unscored rows, and leave others to expire as stale. (C) DivOPD
composes batches on the learner side through the three stages in
Section~\ref{sec:method}: validity gate, rollout-first coverage, and
within-rollout disagreement focus using $s(u)$
(Equation~\ref{eq:learning-need}). (D) When rollouts repeatedly make no
progress, bounded teacher recovery helps
the student continue; teacher actions are excluded
from the loss, and recovery is disabled at evaluation. Bottom: independent
exploration and training, with periodic model synchronization and stale-row expiry.}
\label{fig:concept}
\end{figure}

When a rollout repeatedly makes no progress, selecting among its turns does
not expose the student to new states. We therefore extend
DivOPD with bounded teacher recovery
($+$R): the teacher briefly intervenes, then the student's subsequent turns
return to the same batch composer.

\paragraph{Contributions.}
\textbf{(1)} We measure how arrival-order batching uses student experience in
asynchronous multi-turn OPD: it concentrates batches on a few
rollouts, spends slots on unscored rows, and lets many rollouts expire untrained
(Section~\ref{sec:diagnosis}, Appendix Table~\ref{tab:utilization-full}).
\textbf{(2)} We introduce DivOPD, a rollout-aware batch composer that retains
the per-turn distillation loss and optimizer while changing which scored turns
receive training weight (Section~\ref{sec:method}). Bounded teacher recovery
extends the same composer to help students resume after repeated no-progress turns.
\textbf{(3)} Across six teacher--student settings, we show success-rate gains
and geometric-mean trained-token and trainer-GPU speedups of $1.84\times$ and
$1.87\times$ over vanilla OPD. Controlled training runs and shared-buffer replays
separate the effects of coverage and focus (Section~\ref{sec:experiments}).

\section{Asynchronous Multi-turn OPD and Batch Allocation}
\label{sec:prelim}

\subsection{Learning from queued student turns}

\textbf{Asynchronous multi-turn OPD.} Rollout workers generate student
interactions and frozen-teacher scores independently of learner updates,
adding turns to a queue with rollout IDs and policy versions. The learner forms
fixed-size batches, discarding turns whose generating policy is too old.
A rollout is
$\rho=(o_1,a_1,\ldots,o_{T_\rho},a_{T_\rho})$, whose horizon $T_\rho$ depends on
the outcome. We write $\pi_\theta$ for the student and $\pi_\phi$ for the frozen
teacher, reserving $\tau$ for the target success rate. A batch contains
$N=B$ rows, one per student response turn; $B$ is the learner's per-update
budget.
For row $t$, let $y_{t,i}$ be the $i$-th response token and $h_{t,<i}$ the
prompt and tokens preceding it. The effective mask is
$m_{t,i}=m^{\mathrm{resp}}_{t,i}\wedge m^{\mathrm{valid}}_{t,i}$ with valid-token
set $V_t=\{i:m_{t,i}=1\}$: tokens that belong to the response and have valid
teacher scores. Rows with $|V_t|=0$ are \emph{dead}. With
distillation coefficient $\beta$, the stopped per-token advantage is
\begin{equation}
A_{t,i} = \beta\,\operatorname{sg}\!\left[
\log \pi_\phi(y_{t,i}\mid h_{t,<i})
- \log \pi_{\theta_{\mathrm{old}}}(y_{t,i}\mid h_{t,<i})
\right], \qquad i\in V_t.
\label{eq:opd-adv}
\end{equation}
Here $\theta_{\mathrm{old}}$ is the learner snapshot immediately before the
update; its log probabilities are recomputed on queued tokens, rather than
taken from their generating policy. $\operatorname{sg}$ stops gradients through
the score difference. The loss averages valid tokens within each row, then
averages all $N$ rows (seq-mean-token-mean). A dead row contributes zero but stays in the
denominator. Each batch receives one optimizer update, so the learner-snapshot
ratio is one at gradient evaluation and PPO clipping is inactive;
there is no entropy term, KL penalty, critic, or reference model
(Appendix~\ref{app:opd-objective}).

\subsection{Where the batch budget goes}
\label{sec:diagnosis}

Each queued turn is identified by rollout ID and turn index
$(\mathrm{rid},t)$. We ask two questions: how many rollouts are represented
in each batch, and how many are ever used before they expire? For a batch, let $p_r$
be the fraction of tokens from rollout $r$. The effective rollout count
$(\sum_r p_r^2)^{-1}$ measures token-weighted coverage, and $\max_r p_r$
measures the largest rollout's share.

\begin{figure}[!t]
\centering
\includegraphics[width=0.98\linewidth]{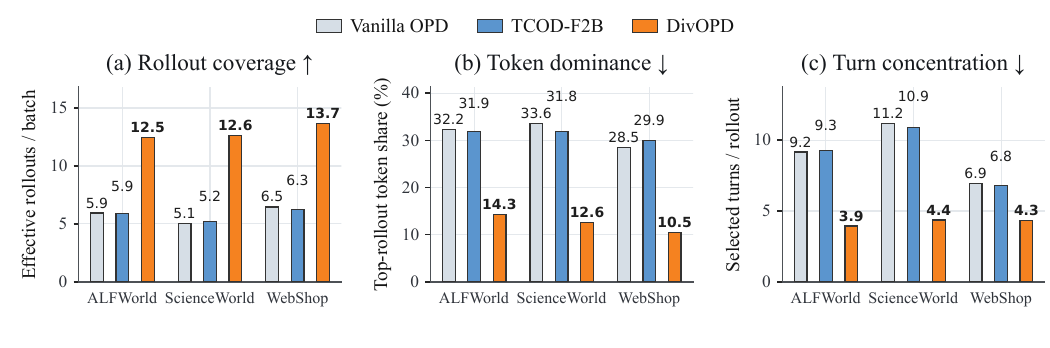}
\caption{\textbf{DivOPD changes batch composition across environments.}
Each bar averages the final-$40\%$ diagnostic over the two student sizes in
that environment. DivOPD covers more rollouts and reduces both concentration
measures in all three environments; TCOD-F2B remains close to vanilla OPD.
The six setting-level results appear in Appendix
Table~\ref{tab:coverage-cost-full}.}
\label{fig:clustering}
\end{figure}

\textbf{Concentration within an update.} At the setting level, vanilla OPD batches contain
only $3.9$--$8.3$ effective rollouts, and the largest rollout supplies
$24$--$40\%$ of batch tokens. In addition, $5.0$--$27.8\%$ of selected rows on ALFWorld and WebShop
have no teacher-scored token in the final training window. All ScienceWorld
rows pass the validity checks, allowing us to examine allocation without
dead-slot removal.

\textbf{Rollouts that expire unused.} Turns from one rollout arrive together.
With a finite staleness window, repeatedly selecting from early rollouts leaves
less time to use later arrivals. Shared-buffer replays show that roughly half of
valid rollouts are never selected by the arrival-order reader
(Appendix Table~\ref{tab:utilization-full}). This motivates spreading batch slots across
rollouts before ranking their turns. Section~\ref{sec:experiments} tests
whether changing this allocation improves learning.

\section{DivOPD}
\label{sec:method}

Motivated by the batch composition in Figure~\ref{fig:clustering}, DivOPD
composes learner batches from this asynchronous queue of already scored turns.
It first combines turns retained from earlier updates (pending turns)
with new arrivals in a pool of at most $cB$ turns. From this pool, the three
stages shown from left to right in Figure~\ref{fig:concept}C remove invalid
turns, allocate slots across rollouts, and rank turns within each rollout. The
batch contains $B$ valid turns whenever the pool has enough.

\subsection{Stage 1: validity gate}
\label{sec:validity}

A turn is valid only if its prompt was not truncated and it contains the
tokens, teacher log probabilities, and action mask needed by the loss. If a
teacher-valid mask is supplied, it must align with the action mask and share
at least one valid position. Turns that fail these checks are discarded
before batch selection.

In four ALFWorld replay buffers, every rejected row is a response turn with a
truncated prompt from a failed rollout and zero scored tokens. Removing these rows frees
slots for valid turns. Longer context windows or truncating the history before
generation could also prevent such rows (Appendix~\ref{app:replay-audit}).

\subsection{Stage 2: rollout-first coverage}
\label{sec:coverage}

To prevent one long rollout from filling the batch, the composer groups
valid turns by rollout ID and visits the longest-waiting groups first.
On the first pass through these groups, each rollout contributes at most $k$ turns,
which guarantees $\lceil B/k\rceil$ distinct rollouts whenever that many groups
are available. If the batch is not full, each
additional pass raises the cap by one. Thus $k$ limits the first pass, not
the final number of turns per rollout. Unlike shuffling, this changes which
turns enter the batch, not just their order.

Unselected valid turns are kept for later updates until they expire. This
allows more rollouts to enter each batch and gives waiting rollouts another
chance to be used. The cap counts turns, not tokens, so it does not bound a
rollout's token share when response lengths vary.

\subsection{Stage 3: within-rollout disagreement focus}
\label{sec:focus}

Once a rollout receives slots, focus chooses the turns that fill them. Let
$v(u)$ be the student policy version that generated turn $u$. DivOPD ranks its
valid turns by the cached score over its valid tokens $V_u$,
\begin{equation}
s(u) = \sum_{i\in V_u}\left[
\log\pi_{\theta_{v(u)}}(y_i\mid h_{u,<i})
-\log\pi_\phi(y_i\mid h_{u,<i})\right],
\label{eq:learning-need}
\end{equation}
in descending order, breaking ties by $(\mathrm{rid},t)$, on every
pass. The score measures cumulative, not per-token, disagreement: it equals
the valid-token count times the mean token score. With a complete response
mask, its expectation is the sequence-level reverse KL at the generating
policy~\citep{gu2024minillm,agarwal2023gkd}; a partial mask restricts it to
scored tokens. A small signed sum may occur because positive and negative token
scores cancel.

Focus uses this score only to rank turns within allocated rollout slots;
it neither takes slots from other rollouts nor adds a length-based loss
weight. The score reuses cached log probabilities without another model pass.
It is a selection heuristic, not the current learner's row loss or gradient
norm: the learner can be up to two versions ahead.
Appendix~\ref{app:utilization} audits ranking stability under this lag;
Section~\ref{sec:ablation-main} tests its added benefit at fixed allocation.

\subsection{Optional bounded teacher recovery}
\label{sec:recovery}

Batch selection cannot help a rollout that repeatedly makes no progress.
DivOPD$+$R therefore lets the teacher act for at most $m$ turns after $p$
no-progress turns, then returns control to the student. Teacher actions receive
no distillation weight, subsequent student turns use the same composer, and
recovery is disabled during evaluation. Unlike core DivOPD, $+$R changes the
visited trajectory. Appendix~\ref{app:ablation} defines the trigger and gives
the per-setting patience, takeover cap, and warmup values.

\subsection{Training distribution and invariants}
\label{sec:drainage}
\label{sec:invariants}

The per-turn loss is fixed, but selection still changes what the update
averages over. Coverage limits the influence of rollout length on allocation,
and focus shifts weight toward high-disagreement
turns inside each rollout. We do not add importance weights to undo either
change; Appendix~\ref{app:removal} states this shift precisely.

DivOPD-base uses the same gate and rollout
allocations with uniform within-rollout sampling; DivOPD replaces that
sampling with focus. Both use Equation~\ref{eq:opd-adv} with unit
per-turn weights, and the same reduction, optimizer, and rollout procedure
as vanilla OPD. Appendix Table~\ref{tab:sampler-config} collects the batch,
pool, staleness, pending, cap, and loss settings. Algorithm~\ref{alg:divopd-sampler} gives the full procedure;
core DivOPD changes only queue consumption.

\section{Experiments}
\label{sec:experiments}

\subsection{Setup and metrics}
\label{sec:setup}

We use the same asynchronous explorer--learner pipeline for all methods and
evaluate two student scales per environment:
ALFWorld~\citep{shridhar2020alfworld} (Qwen3-1.7B/4B),
WebShop~\citep{yao2022webshop} (Qwen2.5-3B/7B-Instruct), and
ScienceWorld~\citep{wang2022scienceworld} (Qwen2.5-1.5B/3B-Instruct).
Teachers are environment-specific GiGPO-Qwen2.5-7B
models~\citep{feng2026group} on ALFWorld and WebShop, and the ScienceWorld
checkpoint released with Embodied-Planner-R1~\citep{fei2025unleashing} on
ScienceWorld. Baselines are vanilla
OPD~\citep{lin2020autoregressive,agarwal2023gkd},
TCOD-F2B~\citep{wang2026tcod}, and
TurnOPD~\citep{zhou2026turnopd}, all with the same learner batch size. DivOPD-focus is an earlier
focus-first composer reported for reference; it does not share the sampler of
the other DivOPD variants, so it is not a controlled ablation of focus.
DivOPD$+$R follows Section~\ref{sec:recovery}; exact settings appear in
Appendix~\ref{app:ablation}, and evaluation remains student-only.

\begin{figure}[!b]
\centering
\includegraphics[width=\linewidth]{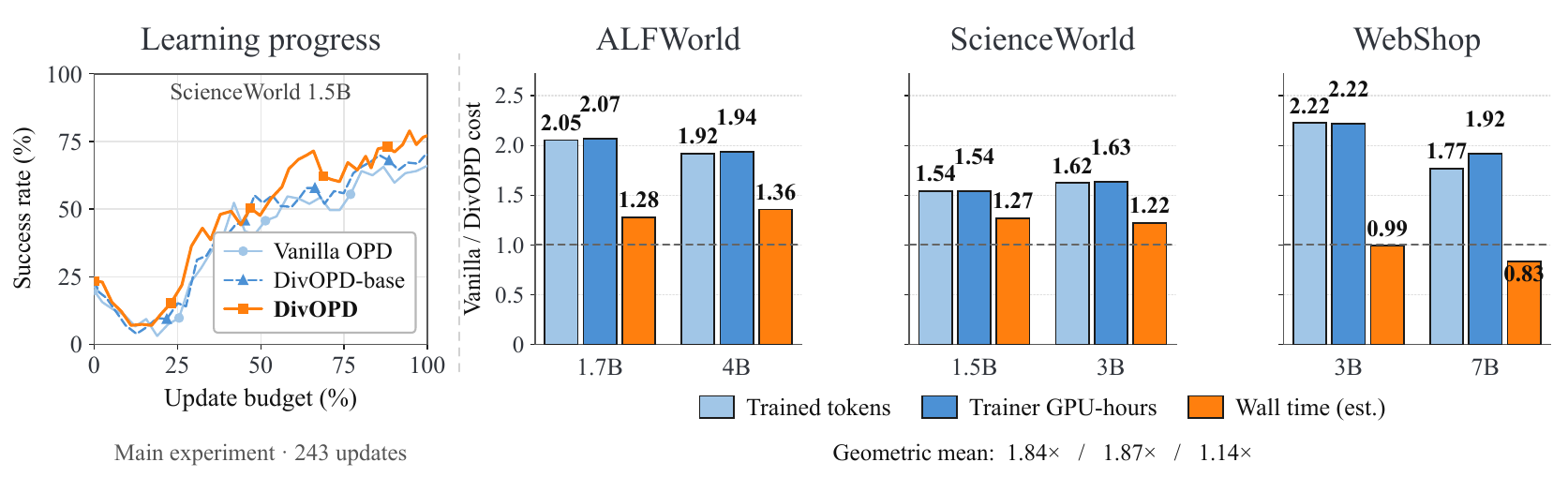}
\caption{\textbf{Efficiency depends on which cost is measured.}
Left: unsmoothed ScienceWorld 1.5B learning curves over a common 243-update
budget. Right: vanilla-to-DivOPD cost ratios to the first evaluation at
$\tau$; values above one favor DivOPD. Wall time includes experience reads
(Appendix Table~\ref{tab:resource-efficiency}).}
\label{fig:resource-main}
\end{figure}

Vanilla reads $B$ candidates per update; DivOPD can inspect $cB$ pending and
fresh turns. A no-cap control with uniform within-rollout sampling matches DivOPD-base's candidate
access (Section~\ref{sec:ablation-main}). Appendix~\ref{app:sampler-algorithm}
gives all numerical training and evaluation settings.

We report peak success rate (SR) and final-five SR, the mean over the last
five evaluation checkpoints; $\pm$ denotes the standard deviation across
those checkpoints. For efficiency, the target success rate $\tau$ is a
multiple of 5 at or below vanilla OPD's peak SR. For either cumulative
learner-side resource $C$ (training tokens or trainer GPU-hours), we report
$C_{\mathrm{vanilla}}(\tau)/C_{\mathrm{method}}(\tau)$, where each cost is
accumulated through the first evaluation reaching $\tau$. Values above one
favor the method; a dash means that the method did not reach $\tau$. We also
report normalized area under the success-rate
curve over a common learner-update budget (nAUC), equivalently the mean SR over
that budget, as a target-free comparison (Section~\ref{sec:efficiency}). The
trainer-loop wall-time estimate, which includes experience reads, improves by
$1.14\times$ on geometric average, although WebShop 3B and 7B show no
wall-time gain ($0.99\times$ and $0.83\times$); Appendix~\ref{app:cost} gives
the full breakdown.

\subsection{Success rates across six settings}

Table~\ref{tab:main-results} compares accuracy and learner-side efficiency to target.

\newcommand{\fullmainresultstable}{%
\begin{table}[ht]
\caption{\textbf{Full results across six teacher--student settings.} Peak SR is
the best checkpoint; Final-5 is the mean $\pm$ standard deviation over the
last five evaluation checkpoints. Best values are highlighted in bold;
second-best are underlined.}
\label{tab:main-results-full}
\centering
\fontsize{8.0}{10.0}\selectfont
\setlength{\tabcolsep}{2.4pt}
\renewcommand{\arraystretch}{1.03}
\begin{tabular*}{\linewidth}{@{\extracolsep{\fill}}llrrrrr@{}}
\toprule
\textbf{Setting} & \textbf{Method}
& \multicolumn{2}{c}{\textbf{Accuracy}}
& \multicolumn{3}{c}{\textbf{Efficiency to $\tau$}} \\
\cmidrule(lr){3-4}\cmidrule(lr){5-7}
{} & {}
& \textbf{Peak}$\uparrow$ & \textbf{Final-5}$\uparrow$
& \textbf{Updates}$\downarrow$ & \textbf{Trainer GPU h}$\downarrow$ & \textbf{GPU speedup}$\uparrow$ \\
\midrule
\multirow{7}{*}{\shortstack[l]{\textbf{ALFWorld}\\Qwen3-4B\\$\tau{=}80$}}
& Vanilla OPD
& 86.43 & 84.57 $\pm$ 1.48 & 184 & 13.51 & 1.00$\times$ \\
& TCOD-F2B
& 87.14 & 83.00 $\pm$ 1.28 & 182 & 8.95 & 1.51$\times$ \\
& TurnOPD
& 87.14 & 86.43 $\pm$ 0.51 & 161 & 8.50 & 1.59$\times$ \\
\cmidrule(lr){2-7}
& DivOPD-focus
& 89.29 & 85.29 $\pm$ 3.10 & 148 & 7.97 & 1.70$\times$ \\
& DivOPD-base
& \tbest{92.14} & \snd{88.86 $\pm$ 2.29} & 154 & 10.61 & 1.27$\times$ \\
& \textbf{DivOPD}
& \snd{91.43} & 88.00 $\pm$ 2.74 & \snd{127} & \tbest{6.98} & \tbest{1.94}$\times$ \\
& \textbf{DivOPD$+$R}
& \tbest{92.14} & \tbest{89.00 $\pm$ 2.51} & \tbest{126} & \snd{7.06} & \snd{1.91}$\times$ \\
\specialrule{0.55pt}{2.2pt}{1.0pt}
\multirow{7}{*}{\shortstack[l]{\textbf{ALFWorld}\\Qwen3-1.7B\\$\tau{=}70$}}
& Vanilla OPD
& 77.86 & 74.00 $\pm$ 3.18 & 208 & 8.21 & 1.00$\times$ \\
& TCOD-F2B
& 72.86 & 68.43 $\pm$ 3.09 & 211 & 5.04 & 1.63$\times$ \\
& TurnOPD
& 78.57 & \snd{75.71 $\pm$ 2.72} & 195 & 4.66 & 1.76$\times$ \\
\cmidrule(lr){2-7}
& DivOPD-focus
& \snd{80.00} & 71.00 $\pm$ 6.62 & 214 & 5.62 & 1.46$\times$ \\
& DivOPD-base
& 74.29 & 72.86 $\pm$ 1.43 & 189 & 6.08 & 1.35$\times$ \\
& \textbf{DivOPD}
& 79.29 & 71.86 $\pm$ 7.29 & \snd{155} & \snd{3.97} & \snd{2.07}$\times$ \\
& \textbf{DivOPD$+$R}
& \tbest{82.86} & \tbest{78.29 $\pm$ 1.20} & \tbest{150} & \tbest{3.75} & \tbest{2.19}$\times$ \\
\specialrule{0.9pt}{2.8pt}{1.0pt}
\multirow{7}{*}{\shortstack[l]{\textbf{WebShop}\\Qwen2.5-3B\\$\tau{=}60$}}
& Vanilla OPD
& 60.94 & 43.44 $\pm$ 13.02 & 147 & 10.02 & 1.00$\times$ \\
& TCOD-F2B
& 67.19 & 61.72 $\pm$ 4.42 & 104 & 6.11 & 1.64$\times$ \\
& TurnOPD
& 68.75 & 58.59 $\pm$ 8.80 & 114 & 5.44 & 1.84$\times$ \\
\cmidrule(lr){2-7}
& DivOPD-focus
& 70.31 & 66.41 $\pm$ 3.45 & \snd{99} & \snd{4.17} & \snd{2.40}$\times$ \\
& DivOPD-base
& \snd{78.12} & \snd{73.12 $\pm$ 4.64} & 113 & 5.33 & 1.88$\times$ \\
& \textbf{DivOPD}
& \snd{78.12} & 67.81 $\pm$ 3.96 & 106 & 4.52 & 2.22$\times$ \\
& \textbf{DivOPD$+$R}
& \tbest{79.69} & \tbest{76.41 $\pm$ 4.04} & \tbest{87} & \tbest{3.84} & \tbest{2.61}$\times$ \\
\specialrule{0.55pt}{2.2pt}{1.0pt}
\multirow{7}{*}{\shortstack[l]{\textbf{WebShop}\\Qwen2.5-7B\\$\tau{=}85$}}
& Vanilla OPD
& 87.50 & 82.50 $\pm$ 3.11 & 88 & 8.69 & 1.00$\times$ \\
& TCOD-F2B
& 89.06 & 83.59 $\pm$ 3.79 & 97 & 6.23 & 1.40$\times$ \\
& TurnOPD
& \snd{89.84} & 85.94 $\pm$ 2.47 & 106 & 6.19 & 1.41$\times$ \\
\cmidrule(lr){2-7}
& DivOPD-focus
& \snd{89.84} & \snd{86.88 $\pm$ 2.17} & \tbest{82} & \tbest{4.36} & \tbest{1.99}$\times$ \\
& DivOPD-base
& 88.28 & 78.44 $\pm$ 4.37 & 116 & 7.14 & 1.22$\times$ \\
& \textbf{DivOPD}
& \tbest{90.62} & 82.66 $\pm$ 5.25 & \snd{86} & \snd{4.53} & \snd{1.92}$\times$ \\
& \textbf{DivOPD$+$R}
& \snd{89.84} & \tbest{88.28 $\pm$ 2.71} & 101 & 5.20 & 1.67$\times$ \\
\specialrule{0.9pt}{2.8pt}{1.0pt}
\multirow{7}{*}{\shortstack[l]{\textbf{ScienceWorld}\\Qwen2.5-3B\\$\tau{=}85$}}
& Vanilla OPD
& 85.55 & 80.78 $\pm$ 4.09 & 233 & 2.71 & 1.00$\times$ \\
& TCOD-F2B
& 87.89 & 83.98 $\pm$ 2.58 & 228 & \snd{2.26} & \snd{1.20}$\times$ \\
& TurnOPD
& 83.20 & 80.86 $\pm$ 2.72 & \na & \na & \na \\
\cmidrule(lr){2-7}
& DivOPD-focus
& 77.34 & 74.69 $\pm$ 2.20 & \na & \na & \na \\
& DivOPD-base
& 86.72 & 82.66 $\pm$ 1.50 & \snd{190} & 2.73 & 0.99$\times$ \\
& \textbf{DivOPD}
& \snd{88.28} & \tbest{85.47 $\pm$ 2.49} & \tbest{131} & \tbest{1.66} & \tbest{1.63}$\times$ \\
& \textbf{DivOPD$+$R}
& \tbest{90.23} & \snd{85.08 $\pm$ 2.92} & 214 & 2.52 & 1.07$\times$ \\
\specialrule{0.55pt}{2.2pt}{1.0pt}
\multirow{7}{*}{\shortstack[l]{\textbf{ScienceWorld}\\Qwen2.5-1.5B\\$\tau{=}65$}}
& Vanilla OPD
& 66.02 & 63.75 $\pm$ 2.49 & 211 & 1.97 & 1.00$\times$ \\
& TCOD-F2B
& 71.88 & 66.33 $\pm$ 3.77 & 221 & 1.55 & 1.27$\times$ \\
& TurnOPD
& 71.48 & 68.75 $\pm$ 3.65 & \snd{174} & \tbest{1.28} & \tbest{1.54}$\times$ \\
\cmidrule(lr){2-7}
& DivOPD-focus
& 66.80 & 61.88 $\pm$ 4.71 & 242 & 2.07 & 0.95$\times$ \\
& DivOPD-base
& 70.31 & 67.50 $\pm$ 2.20 & 195 & 1.98 & 0.99$\times$ \\
& \textbf{DivOPD}
& \tbest{78.91} & \snd{76.09 $\pm$ 2.23} & \tbest{149} & \tbest{1.28} & \tbest{1.54}$\times$ \\
& \textbf{DivOPD$+$R}
& \snd{78.12} & \tbest{77.42 $\pm$ 0.75} & 179 & \snd{1.52} & \snd{1.30}$\times$ \\
\bottomrule
\end{tabular*}
\vspace{1pt}

\parbox{\linewidth}{\fontsize{7.0}{7.8}\selectfont\textit{Protocol.}
Every run assigns two GPUs to the trainer. GPU hours accumulate trainer-step
time through the first evaluation reaching $\tau$; speedups are relative to
vanilla OPD. Cap settings appear in Section~\ref{sec:setup}.}
\end{table}
}

\begin{table}[!t]
\definecolor{mainbest}{RGB}{245,194,193}
\definecolor{mainsecond}{RGB}{217,217,252}
\renewcommand{\tbest}[1]{\begingroup\setlength{\fboxsep}{1.0pt}\colorbox{mainbest}{\strut\textbf{\boldmath #1}}\endgroup}
\renewcommand{\snd}[1]{\begingroup\setlength{\fboxsep}{1.0pt}\colorbox{mainsecond}{\strut #1}\endgroup}
\caption{\textbf{Main results.} Peak/Final-5 are best-checkpoint/final-five mean
SR (\%); $\pm$ is the standard deviation over the last five checkpoints. Headers give
teacher/student-initial SR. Tok./GPU are
$C_{\mathrm{vanilla}}(\tau)/C_{\mathrm{method}}(\tau)$ for cumulative learner
training tokens and trainer GPU-hours, respectively. Larger is better; a dash
means that the method did not reach $\tau$. Red/purple
mark best/second-best;
DivOPD-focus is an uncontrolled reference.}
\label{tab:main-results}
\centering
\fontsize{6.6}{7.4}\selectfont
\setlength{\tabcolsep}{1.2pt}
\renewcommand{\arraystretch}{0.89}
\begin{tabular*}{\linewidth}{@{\extracolsep{\fill}}lrrrrrrrr@{}}
\toprule
\textbf{ALFWorld}
& \multicolumn{4}{c}{\textbf{Qwen3-1.7B} ($T$/Init: 87.86/7.86; $\tau{=}70$)}
& \multicolumn{4}{c}{\textbf{Qwen3-4B} ($T$/Init: 87.86/26.43; $\tau{=}80$)} \\
\cmidrule(lr){2-5}\cmidrule(lr){6-9}
\textbf{Method} & \textbf{Peak}$\uparrow$ & \textbf{Final-5}$\uparrow$ & \textbf{Tok.}$\uparrow$ & \textbf{GPU}$\uparrow$
& \textbf{Peak}$\uparrow$ & \textbf{Final-5}$\uparrow$ & \textbf{Tok.}$\uparrow$ & \textbf{GPU}$\uparrow$ \\
\midrule
Vanilla OPD  & 77.86 & 74.00 $\pm$ 3.18 & 1.00$\times$ & 1.00$\times$ & 86.43 & 84.57 $\pm$ 1.48 & 1.00$\times$ & 1.00$\times$ \\
TCOD-F2B     & 72.86 & 68.43 $\pm$ 3.09 & 1.62$\times$ & 1.63$\times$ & 87.14 & 83.00 $\pm$ 1.28 & 1.49$\times$ & 1.51$\times$ \\
TurnOPD      & 78.57 & \snd{75.71 $\pm$ 2.72} & 1.77$\times$ & 1.76$\times$ & 87.14 & 86.43 $\pm$ 0.51 & 1.57$\times$ & 1.59$\times$ \\
\cmidrule(lr){1-9}
DivOPD-focus & \snd{80.00} & 71.00 $\pm$ 6.62 & 1.50$\times$ & 1.46$\times$ & 89.29 & 85.29 $\pm$ 3.10 & 1.69$\times$ & 1.70$\times$ \\
DivOPD-base  & 74.29 & 72.86 $\pm$ 1.43 & 1.35$\times$ & 1.35$\times$ & \tbest{92.14} & \snd{88.86 $\pm$ 2.29} & 1.27$\times$ & 1.27$\times$ \\
\textbf{DivOPD} & 79.29 & 71.86 $\pm$ 7.29 & \snd{2.05}$\times$ & \snd{2.07}$\times$ & \snd{91.43} & 88.00 $\pm$ 2.74 & \tbest{1.92}$\times$ & \tbest{1.94}$\times$ \\
\textbf{DivOPD$+$R} & \tbest{82.86} & \tbest{78.29 $\pm$ 1.20} & \tbest{2.17}$\times$ & \tbest{2.19}$\times$ & \tbest{92.14} & \tbest{89.00 $\pm$ 2.51} & \snd{1.90}$\times$ & \snd{1.91}$\times$ \\
\specialrule{0.8pt}{2.8pt}{1.2pt}

\textbf{WebShop}
& \multicolumn{4}{c}{\textbf{Qwen2.5-3B} ($T$/Init: 85.94/1.56; $\tau{=}60$)}
& \multicolumn{4}{c}{\textbf{Qwen2.5-7B} ($T$/Init: 85.94/14.06; $\tau{=}85$)} \\
\cmidrule(lr){2-5}\cmidrule(lr){6-9}
\textbf{Method} & \textbf{Peak}$\uparrow$ & \textbf{Final-5}$\uparrow$ & \textbf{Tok.}$\uparrow$ & \textbf{GPU}$\uparrow$
& \textbf{Peak}$\uparrow$ & \textbf{Final-5}$\uparrow$ & \textbf{Tok.}$\uparrow$ & \textbf{GPU}$\uparrow$ \\
\midrule
Vanilla OPD  & 60.94 & 43.44 $\pm$ 13.02 & 1.00$\times$ & 1.00$\times$ & 87.50 & 82.50 $\pm$ 3.11 & 1.00$\times$ & 1.00$\times$ \\
TCOD-F2B     & 67.19 & 61.72 $\pm$ 4.42 & 1.64$\times$ & 1.64$\times$ & 89.06 & 83.59 $\pm$ 3.79 & 1.27$\times$ & 1.40$\times$ \\
TurnOPD      & 68.75 & 58.59 $\pm$ 8.80 & 1.86$\times$ & 1.84$\times$ & \snd{89.84} & 85.94 $\pm$ 2.47 & 1.30$\times$ & 1.41$\times$ \\
\cmidrule(lr){1-9}
DivOPD-focus & 70.31 & 66.41 $\pm$ 3.45 & \snd{2.37}$\times$ & \snd{2.40}$\times$ & \snd{89.84} & \snd{86.88 $\pm$ 2.17} & \tbest{1.91}$\times$ & \tbest{1.99}$\times$ \\
DivOPD-base  & \snd{78.12} & \snd{73.12 $\pm$ 4.64} & 1.88$\times$ & 1.88$\times$ & 88.28 & 78.44 $\pm$ 4.37 & 1.11$\times$ & 1.22$\times$ \\
\textbf{DivOPD} & \snd{78.12} & 67.81 $\pm$ 3.96 & 2.22$\times$ & 2.22$\times$ & \tbest{90.62} & 82.66 $\pm$ 5.25 & \snd{1.77}$\times$ & \snd{1.92}$\times$ \\
\textbf{DivOPD$+$R} & \tbest{79.69} & \tbest{76.41 $\pm$ 4.04} & \tbest{2.60}$\times$ & \tbest{2.61}$\times$ & \snd{89.84} & \tbest{88.28 $\pm$ 2.71} & 1.52$\times$ & 1.67$\times$ \\
\specialrule{0.8pt}{2.8pt}{1.2pt}

\textbf{ScienceWorld}
& \multicolumn{4}{c}{\textbf{Qwen2.5-1.5B} ($T$/Init: 93.75/21.48; $\tau{=}65$)}
& \multicolumn{4}{c}{\textbf{Qwen2.5-3B} ($T$/Init: 93.75/45.31; $\tau{=}85$)} \\
\cmidrule(lr){2-5}\cmidrule(lr){6-9}
\textbf{Method} & \textbf{Peak}$\uparrow$ & \textbf{Final-5}$\uparrow$ & \textbf{Tok.}$\uparrow$ & \textbf{GPU}$\uparrow$
& \textbf{Peak}$\uparrow$ & \textbf{Final-5}$\uparrow$ & \textbf{Tok.}$\uparrow$ & \textbf{GPU}$\uparrow$ \\
\midrule
Vanilla OPD  & 66.02 & 63.75 $\pm$ 2.49 & 1.00$\times$ & 1.00$\times$ & 85.55 & 80.78 $\pm$ 4.09 & 1.00$\times$ & 1.00$\times$ \\
TCOD-F2B     & 71.88 & 66.33 $\pm$ 3.77 & 1.21$\times$ & 1.27$\times$ & 87.89 & 83.98 $\pm$ 2.58 & \snd{1.21}$\times$ & \snd{1.20}$\times$ \\
TurnOPD      & 71.48 & 68.75 $\pm$ 3.65 & \snd{1.49}$\times$ & \tbest{1.54}$\times$ & 83.20 & 80.86 $\pm$ 2.72 & \na & \na \\
\cmidrule(lr){1-9}
DivOPD-focus & 66.80 & 61.88 $\pm$ 4.71 & 0.94$\times$ & 0.95$\times$ & 77.34 & 74.69 $\pm$ 2.20 & \na & \na \\
DivOPD-base  & 70.31 & 67.50 $\pm$ 2.20 & 1.00$\times$ & 0.99$\times$ & 86.72 & 82.66 $\pm$ 1.50 & 1.00$\times$ & 0.99$\times$ \\
\textbf{DivOPD} & \tbest{78.91} & \snd{76.09 $\pm$ 2.23} & \tbest{1.54}$\times$ & \tbest{1.54}$\times$ & \snd{88.28} & \tbest{85.47 $\pm$ 2.49} & \tbest{1.62}$\times$ & \tbest{1.63}$\times$ \\
\textbf{DivOPD$+$R} & \snd{78.12} & \tbest{77.42 $\pm$ 0.75} & 1.29$\times$ & \snd{1.30}$\times$ & \tbest{90.23} & \snd{85.08 $\pm$ 2.92} & 1.03$\times$ & 1.07$\times$ \\
\bottomrule
\end{tabular*}
\vspace{1pt}

\parbox{\linewidth}{\fontsize{6.8}{7.5}\selectfont\textit{Protocol.}
All runs use two trainer GPUs. Full updates and GPU hours appear in Appendix
Table~\ref{tab:main-results-full}.}
\end{table}

Across the six settings, DivOPD raises mean peak SR from $77.4$ to $84.4$ and
final-five SR from $71.5$ to $78.6$. Peak SR improves in all six settings
and final-five SR in five. On ALFWorld 1.7B, where late
checkpoints fluctuate, DivOPD$+$R reaches the highest final-five SR
($78.29\pm1.20$).

\subsection{Learner efficiency and elapsed time}
\label{sec:efficiency}

Our primary efficiency claim concerns learner-side resources. We report
learner training tokens, trainer GPU-hours, and elapsed trainer-loop time separately
(Figure~\ref{fig:resource-main}, right)
because reducing learner work need not accelerate the whole pipeline equally.

DivOPD reaches every target with fewer learner training tokens and trainer GPU-hours
than vanilla OPD: the geometric-mean speedups are $1.84\times$ and
$1.87\times$, respectively. On ALFWorld and WebShop, the gain combines fewer
updates with fewer tokens per update; on ScienceWorld, where all rows are
valid, it comes primarily from fewer updates. Appendix~\ref{app:cost} gives the
cost breakdown.

The trainer-loop wall-time estimate improves by $1.14\times$ on geometric
average, but the ratio is $0.99\times$ on WebShop 3B and $0.83\times$ on
WebShop 7B. Once learner updates become cheaper, rollout generation and data
reads account for a larger share of elapsed time (Appendix~\ref{app:cost}).

\begin{figure}[H]
\centering
\includegraphics[width=\linewidth]{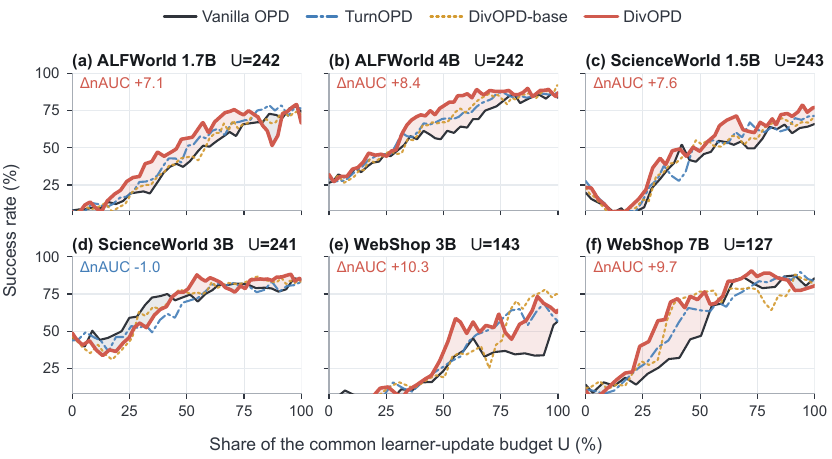}
\caption{\textbf{Learning curves at a common update budget.} Each panel aligns
the archived runs on the largest learner-update budget $U$ they share. Curves
are unsmoothed evaluation success rates. The normalized area under each curve
(nAUC) is its mean SR over this budget; shading shows the area between DivOPD
and vanilla OPD. Appendix~\ref{app:fixed-budget} gives the protocol.}
\label{fig:fixed-budget}
\end{figure}

\textbf{A target-free comparison.} We also align the archived runs on a
common learner-update budget $U$ (Figure~\ref{fig:fixed-budget}). DivOPD has
the higher nAUC in five of six settings, providing evidence of improvement
without selecting a target $\tau$. Figure~\ref{fig:resource-main} (left)
enlarges the ScienceWorld 1.5B comparison.

\textbf{Sensitivity to asynchronous producer load.} Table~\ref{tab:async-pressure}
varies the explorer rollout batch on ALFWorld 1.7B over 150 learner updates,
with sampler settings fixed. The two readers share
the producer configuration. To show the pressure that actually accumulates,
we report the measured queue arrival rate divided by the learner drain rate.
A value above one means experience enters the queue faster than the learner
removes it; this ratio is an outcome of the reader, not a matched control.
Vanilla OPD and DivOPD-base are nearly tied in final SR at light load,
where both measured ratios are below one. At medium
and high load, DivOPD-base raises final
SR by $2.9$ and $10.0$ points and greatly reduces the raw stale-row count.
Relative to vanilla, its nAUC is higher by $0.4$, $0.7$, and $0.9$ points from light to high load,
so the advantage is visible over the fixed training budget rather than only
at the final checkpoint. This pattern is consistent with the intended use
case: allocating limited learner slots when production exceeds consumption.

\begin{table}[!htbp]
\caption{\textbf{Stress test under increasing asynchronous producer load.}
ALFWorld Qwen3-1.7B over 150 learner updates.
Arrival/drain is the measured queue arrival rate divided by the learner drain
rate; values above one indicate a growing backlog. SR and nAUC are percentages;
stale rows are raw counts. Bold marks the better value within each load level.}
\label{tab:async-pressure}
\centering
\fontsize{8.6}{9.6}\selectfont
\setlength{\tabcolsep}{4.0pt}
\renewcommand{\arraystretch}{1.04}
\lighttablestyle
\begin{tabular*}{\linewidth}{@{\extracolsep{\fill}}llrrrrr@{}}
\toprule
\textbf{Load} & \textbf{Reader} & \textbf{Rollout bs} & \textbf{Arrival/drain}
& \textbf{Final SR}$\uparrow$ & \textbf{nAUC@150}$\uparrow$ & \textbf{Stale rows}$\downarrow$ \\
\midrule
\multirow{2}{*}{Low}
& Vanilla OPD  & 2  & 0.77 & \textbf{52.1} & 25.0 & 240 \\
& DivOPD-base  & 2  & 0.55 & 52.0 & \textbf{25.4} & \textbf{53} \\
\cmidrule(lr){1-7}
\multirow{2}{*}{Medium}
& Vanilla OPD  & 6  & 1.35 & 53.6 & 25.3 & 8,035 \\
& DivOPD-base  & 6  & 0.88 & \textbf{56.5} & \textbf{26.0} & \textbf{4,237} \\
\cmidrule(lr){1-7}
\multirow{2}{*}{High}
& Vanilla OPD  & 16 & 2.27 & 49.3 & 26.9 & 30,175 \\
& DivOPD-base  & 16 & 0.89 & \textbf{59.3} & \textbf{27.8} & \textbf{9,766} \\
\bottomrule
\end{tabular*}
\end{table}

\subsection{Separating coverage from focus}
\label{sec:ablation-main}

DivOPD differs from vanilla OPD in three ways: it filters invalid turns, reads
more candidates, and changes turn selection. We therefore use separate
controls to test the cap and focus (Figure~\ref{fig:component-controls}).

\textbf{Cap with the same candidate access.} We compare DivOPD-base with an
uncapped ($k=\infty$) control. Both use the same $cB$ candidate-pool size, validity
gate, oldest-first order, pending mechanism, and uniform within-rollout
sampling. Across the six settings, finite $k$ has higher peak SR throughout
(mean $+2.1$ points) and higher final-five SR in five (mean $+3.4$), with all
other selection rules fixed (Table~\ref{tab:matched-access-cap}).

\textbf{Selection with the same candidate access.} We give vanilla OPD the same $256$-turn candidate
stream on one setting per environment and compare it with rollout-first
selection (Table~\ref{tab:access-control}). Rollout-first selection yields
peak SR $1.4$--$4.3$ points higher and Best-5 SR $1.7$--$5.6$ points higher in all
three runs. At the common cutoff, final SR is higher on ScienceWorld, tied on
ALFWorld, and lower on WebShop. Because these are separate runs truncated at a
common learner-version cutoff, their absolute SR is not comparable to
Table~\ref{tab:main-results}.

\textbf{Focus with the same rollout allocation.} DivOPD and DivOPD-base assign
slots in the same way but choose different turns within each rollout.
Focus raises mean peak SR by $2.8$ points and reduces trainer-GPU
time-to-target in all six settings, giving a further $1.38$--$1.60\times$
dataset-mean speedup over DivOPD-base. Final-five SR rises in only three
settings (mean $+1.4$ points).

\textbf{Cap sensitivity.} Peak SR varies by up to $7.9$ points within a
setting over $k\in\{3,\ldots,6\}$ (Appendix Table~\ref{tab:k-sweep}).
The best cap varies by setting, so choosing $k$ matters.

\subsection{Which rollouts and turns are selected?}
\label{sec:coverage-cost}

\textbf{Coverage brings more rollouts into training.} We replay the selection
rules on shared buffers so that every method sees the same experience.
Appendix Table~\ref{tab:utilization-full} tracks whether each valid rollout
contributes any turn over 100 updates. Rollout-first selection lowers the unused fraction from
$49$--$59\%$ to $7$--$14\%$ and reduces the concentration of selected turns.
DivOPD-base and DivOPD allocate rollouts identically here; focus changes which
turns are taken, not how many rollouts are reached.

\textbf{Focus selects different gradients.} Holding the candidate pool and the
rollout allocation fixed, we compare selected gradients with the mean gradient
of all valid candidate turns. Focus increases the measured projection onto this
pool gradient in all 24 audited updates (Appendix~\ref{app:geometry}). This is
a diagnostic before the optimizer update, not a guarantee of better learning.

In the ALFWorld 1.7B replay, focus selects earlier turns from failed rollouts
without changing the success/failure mix
(Appendix Table~\ref{tab:replay-depth-outcome}). Pairwise gradient cosines stay
near zero for every composer: this diagnostic does not show a clear change in
within-batch gradient similarity.

\begin{figure}[H]
\centering
\includegraphics[width=0.96\linewidth]{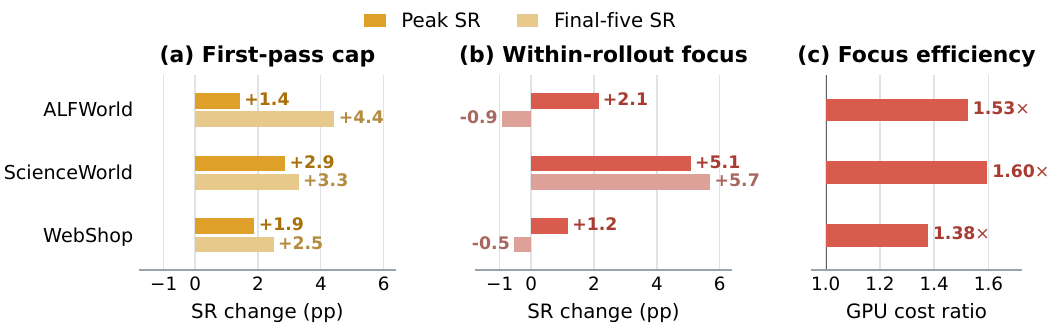}
\caption{\textbf{The cap improves both metrics; focus improves peak SR and learner efficiency.}
Each bar averages the two model sizes within one dataset. (a) DivOPD-base
minus the matched-access $k=\infty$ control isolates the cap. (b) DivOPD minus
DivOPD-base isolates focus at fixed rollout allocation; positive values favor
focus. (c) $C_{\mathrm{base}}(\tau)/C_{\mathrm{DivOPD}}(\tau)$ for trainer
GPU-hours; values above one mean that focus reaches $\tau$ with less trainer
compute. Setting-level results are in
Tables~\ref{tab:matched-access-cap} and~\ref{tab:main-results}.}
\label{fig:component-controls}
\end{figure}

\textbf{Disagreement is more evenly spread across selected rollouts.}
Figure~\ref{fig:kl-stability} tracks the standard deviation of the total
disagreement from each rollout in a learner batch. It is lower under DivOPD
than vanilla OPD through most of training on ALFWorld and ScienceWorld.

\begin{figure}[!htbp]
\centering
\includegraphics[width=0.96\linewidth]{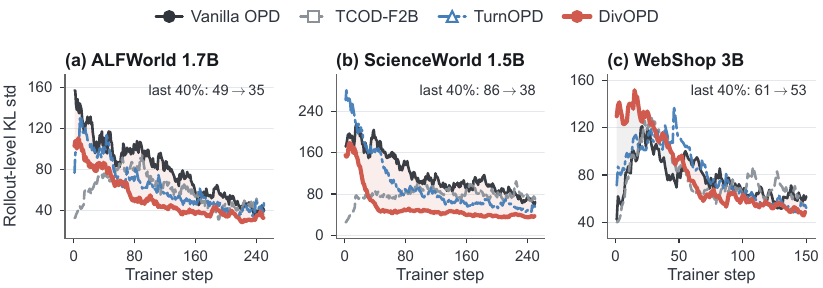}
\caption{\textbf{Cross-rollout disagreement, one setting per environment.}
Curves show the EMA ($\alpha=0.12$) of the within-batch standard deviation of
rollout-summed disagreement; labels give the final-$40\%$ mean (vanilla
$\to$ DivOPD). Red/gray shading marks where DivOPD is lower/higher. All settings appear in
Figure~\ref{fig:kl-stability-all}.}
\label{fig:kl-stability}
\end{figure}

\textbf{Extending selection to no-progress rollouts.} Adding recovery to DivOPD
raises final-five mean SR in five of six settings (cross-setting mean
$78.6\to82.4$) and peak SR in four, while retaining a geometric-mean
trainer-GPU speedup of about $1.7\times$ over vanilla OPD. These learner-side
ratios exclude teacher-generation cost (Appendix Figure~\ref{fig:ablation-recovery}).

\section{Related Work}
\label{sec:layers}

\textbf{Agentic OPD.} Generative distillation changes the sequence source or
objective~\citep{gu2024minillm,ko2024distillm}; TurnOPD balances turns~\citep{zhou2026turnopd};
TCOD and early stopping control rollout length~\citep{wang2026tcod,ziheng2026less};
and Guided-OPD and FutureBridge-OPD change visited states through teacher
intervention~\citep{li2026policy,chen2026look}. DivOPD instead selects which scored turns
enter each asynchronous learner batch (Appendix Table~\ref{tab:layers}).

\textbf{Multi-teacher and asynchronous OPD.} Multi-teacher OPD and Open-MOPD
route supervision or balance token budgets across domains~\citep{ma2026mopd,gao2026open}.
AsyncOPD, Near-Policy, and $f$-OPD control policy lag through correction,
filtering, or freshness~\citep{kang2026asyncopd,rang2026near,chen2026boldsymbol}.
These operate at the domain or sample level; DivOPD allocates a turn budget
across correlated rollouts in one asynchronous learner queue.

\textbf{Data selection and asynchronous learning.} Prioritized selection favors informative
examples~\citep{loshchilov2015online,schaul2015prioritized,katharopoulos2018not,mindermann2022prioritized},
DAPO drops zero-gradient groups~\citep{yu2026dapo}, and asynchronous actor--learner
systems control policy lag~\citep{espeholt2018impala,fu2025areal,hu2026dora}.
DivOPD instead separates rollout allocation from turn selection when queued,
variable-length rollouts compete for a fixed learner budget.

\section{Conclusion and Future Work}

Arrival-order batching in asynchronous multi-turn OPD can let early or long
rollouts dominate a fixed learner budget, leaving valid experience unused.
DivOPD treats this as a batch-composition problem: it filters unusable turns,
caps each rollout's first-pass contribution, and focuses allocated slots on
high-disagreement turns. The per-turn loss and optimizer stay unchanged;
selection changes the data-weighted training objective.

Across six Qwen-family settings on three benchmarks, DivOPD raises
mean peak SR from $77.4$ to $84.4$ and final-five SR from $71.5$ to $78.6$,
while achieving geometric-mean speedups of $1.84\times$ in learner training
tokens and $1.87\times$ in trainer GPU-hours. Controls support the cap and
broader coverage; focus improves peak SR and time-to-target, while the stress
test shows that gains grow with producer load. Recovery further raises mean
final-five SR to $82.4$ while retaining about $1.7\times$ trainer-GPU speedup.
These gains require neither a learned selector nor a new per-turn loss.

Batch construction is therefore part of the learning algorithm in asynchronous
multi-turn OPD, not neutral systems plumbing. Evidence remains limited to
simulated benchmarks and Qwen-family models; total pipeline gains also depend
on rollout and teacher costs. Future work should broaden these settings,
measure end-to-end resources, and adapt allocation to queue pressure.
\label{sec:limitations}

\subsection*{AI use statement}

Generative AI tools were used to draft portions of the manuscript and to
assist with language polishing and organization. The authors supplied the
scientific content and evidence, reviewed and revised all AI-assisted text,
and verified quantitative statements against experiment logs and source
artifacts. The authors take full responsibility for the final manuscript.

\subsection*{Ethics statement}

This work uses public simulated benchmarks and involves no human participants
or personal data. It studies training efficiency rather than deployment in
open-world systems. Teacher intervention can make an agent more capable, so
deployment in settings with real users or high-impact actions would require
task-specific safety evaluation and human review beyond the experiments
reported here.

\subsection*{Reproducibility statement}

Section~\ref{sec:method} specifies the composer and all of its default
hyperparameters; Section~\ref{sec:experiments} specifies the
environments, student--teacher pairs, and the budget-to-target protocol.
DivOPD-base, DivOPD, and DivOPD$+$R share one sampler implementation;
DivOPD-focus is documented separately. Every method row of Appendix
Table~\ref{tab:main-results-full} corresponds to a single configuration file.
The code and these configuration files will be released at \url{https://github.com/HanyangWang0418-oss/DivOPD}.

\begingroup
\footnotesize
\setlength{\bibsep}{0.5pt}
\bibliographystyle{iclr2027_conference}
\bibliography{iclr2027_conference}

@article{hinton2015distilling,
  title={Distilling the knowledge in a neural network},
  author={Hinton, Geoffrey and Vinyals, Oriol and Dean, Jeff},
  journal={arXiv preprint arXiv:1503.02531},
  year={2015}
}

@inproceedings{kim2016sequence,
  title={Sequence-level knowledge distillation},
  author={Kim, Yoon and Rush, Alexander M},
  booktitle={Proceedings of the 2016 conference on empirical methods in natural language processing},
  pages={1317--1327},
  year={2016}
}

@article{bengio2015scheduled,
  title={Scheduled sampling for sequence prediction with recurrent neural networks},
  author={Bengio, Samy and Vinyals, Oriol and Jaitly, Navdeep and Shazeer, Noam},
  journal={Advances in neural information processing systems},
  volume={28},
  year={2015}
}

@inproceedings{lin2020autoregressive,
  title={Autoregressive knowledge distillation through imitation learning},
  author={Lin, Alexander and Wohlwend, Jeremy and Chen, Howard and Lei, Tao},
  booktitle={Proceedings of the 2020 Conference on Empirical Methods in Natural Language Processing (EMNLP)},
  pages={6121--6133},
  year={2020}
}

@inproceedings{gu2024minillm,
  title={Minillm: Knowledge distillation of large language models},
  author={Gu, Yuxian and Dong, Li and Wei, Furu and Huang, Minlie},
  booktitle={International Conference on Learning Representations},
  volume={2024},
  pages={32694--32717},
  year={2024}
}

@article{ko2024distillm,
  title={Distillm: Towards streamlined distillation for large language models},
  author={Ko, Jongwoo and Kim, Sungnyun and Chen, Tianyi and Yun, Se-Young},
  journal={arXiv preprint arXiv:2402.03898},
  year={2024}
}

@inproceedings{wang2026text2grad,
  title={Text2grad: Reinforcement learning from natural language feedback},
  author={Wang, Hanyang and Wang, Lu and Zhang, Chaoyun and Mao, Tianjun and Qin, Si and Lin, Qingwei and Rajmohan, Saravan and Zhang, Dongmei},
  booktitle={International Conference on Learning Representations},
  volume={2026},
  pages={50627--50664},
  year={2026}
}

@article{feng2026group,
  title={Group-in-group policy optimization for llm agent training},
  author={Feng, Lang and Xue, Zhenghai and Liu, Tingcong and An, Bo},
  journal={Advances in Neural Information Processing Systems},
  volume={38},
  pages={46375--46408},
  year={2026}
}

@article{wang2026bipace,
  title={BiPACE: Bisimulation-Guided Policy Optimization with Action Counterfactual Estimation for LLM Agents},
  author={Wang, Hanyang and Ren, Weijieying and Zhang, Yuxiang and Cao, Ding and Zeng, Zhizhao and Zeng, Ke and Zhao, Tianxiang},
  journal={arXiv preprint arXiv:2606.25556},
  year={2026}
}

@article{xia2026skillrl,
  title={Skillrl: Evolving agents via recursive skill-augmented reinforcement learning},
  author={Xia, Peng and Chen, Jianwen and Wang, Hanyang and Liu, Jiaqi and Zeng, Kaide and Wang, Yu and Han, Siwei and Zhou, Yiyang and Zhao, Xujiang and Chen, Haifeng and others},
  journal={arXiv preprint arXiv:2602.08234},
  year={2026}
}

@article{agarwal2023gkd,
  title={Gkd: Generalized knowledge distillation for auto-regressive sequence models},
  author={Agarwal, Rishabh and Vieillard, Nino and Stanczyk, Piotr and Ramos, Sabela and Geist, Matthieu and Bachem, Olivier},
  journal={arXiv preprint arXiv:2306.13649},
  volume={12},
  year={2023}
}

@inproceedings{ross2011reduction,
  title={A reduction of imitation learning and structured prediction to no-regret online learning},
  author={Ross, St{\'e}phane and Gordon, Geoffrey and Bagnell, Drew},
  booktitle={Proceedings of the fourteenth international conference on artificial intelligence and statistics},
  pages={627--635},
  year={2011},
  organization={JMLR Workshop and Conference Proceedings}
}

@article{wang2026tcod,
  title={Tcod: Exploring temporal curriculum in on-policy distillation for multi-turn autonomous agents},
  author={Wang, Jiaqi and Zhang, Wenhao and Shi, Weijie and Li, Yaliang and Cheng, James},
  journal={arXiv preprint arXiv:2604.24005},
  year={2026}
}

@article{ziheng2026less,
  title={Less is more: Early stopping rollout for on-policy distillation},
  author={Ziheng, Zhou and Li, Jiaqi and Tang, Huacong and Wu, Ying Nian and Terzopoulos, Demetri},
  journal={arXiv preprint arXiv:2605.27028},
  year={2026}
}

@article{li2026policy,
  title={On-Policy Distillation with Curriculum Turn-level Guidance for Multi-turn Agents},
  author={Li, Gengsheng and Zheng, Mao and Song, Mingyang and Liu, Ruiqi and Yang, Tianyu and Sun, Jie and Zhong, Qiyong and Guo, Haiyun and Fang, Junfeng and Zhang, Dan and others},
  journal={arXiv preprint arXiv:2606.15912},
  year={2026}
}

@article{zhou2026turnopd,
  title={TurnOPD: Making On-Policy Distillation Turn-Aware for Efficient Long-Horizon Agent Training},
  author={Zhou, Yuhang and Zheng, Kai and Li, Haoling and Peng, Dengyun and Xu, Can and Chen, Jingjing},
  journal={arXiv preprint arXiv:2607.05804},
  year={2026}
}

@article{chen2026look,
  title={Look Ahead Before You Distill: Future Trajectory Validation of Teacher Guidance for Agentic On-Policy Distillation},
  author={Chen, Chishui and Fan, Yaoyou and Sun, Te and Yang, Yi and Sun, Chenghao and Mao, Delin and Qiao, Hongbo and Zhang, Zuowei and Wang, Junxi and Sun, Chenxing and others},
  journal={arXiv preprint arXiv:2608.01953},
  year={2026}
}

@article{fu2025areal,
  title={{AReaL}: A Large-Scale Asynchronous Reinforcement Learning System for Language Reasoning},
  author={Fu, Wei and Gao, Jiaxuan and Shen, Xujie and Zhu, Chen and Mei, Zhiyu and He, Chuyi and Xu, Shusheng and Wei, Guo and Mei, Jun and Wang, Jiashu and Yang, Tongkai and Yuan, Binhang and Wu, Yi},
  journal={arXiv preprint arXiv:2505.24298},
  year={2025}
}

@article{ma2026mopd,
  title={{MOPD}: Multi-Teacher On-Policy Distillation for Capability Integration in {LLM} Post-Training},
  author={Ma, Wenhan and Wei, Jianyu and Zhao, Liang and Zhang, Hailin and Xiao, Bangjun and Li, Lei and Yang, Qibin and Gao, Bofei and Wang, Yudong and Li, Rang and Dong, Jinhao and Sui, Zhifang and Luo, Fuli},
  journal={arXiv preprint arXiv:2606.30406},
  year={2026}
}

@article{kimiteam2026k3,
  title={{Kimi K3}: Open Frontier Intelligence},
  author={{Kimi Team}},
  journal={arXiv preprint arXiv:2607.24653},
  year={2026}
}

@article{kang2026asyncopd,
  title={{AsyncOPD}: How Stale Can On-Policy Distillation Be?},
  author={Kang, Wonjun and Galim, Kevin and Oh, Seunghyuk and Kang, Minjun and Park, Sanghyun and Kim, Donghoon and Lee, Minjae and Kim, Minseo and Tiwari, Rishabh and Zeng, Yuchen and Koo, Hyung Il and Lee, Kangwook},
  journal={arXiv preprint arXiv:2606.24143},
  year={2026}
}

@article{shridhar2020alfworld,
  title={Alfworld: Aligning text and embodied environments for interactive learning},
  author={Shridhar, Mohit and Yuan, Xingdi and C{\^o}t{\'e}, Marc-Alexandre and Bisk, Yonatan and Trischler, Adam and Hausknecht, Matthew},
  journal={arXiv preprint arXiv:2010.03768},
  year={2020}
}

@article{wang2022scienceworld,
  title={Scienceworld: Is your agent smarter than a 5th grader?, 2022},
  author={Wang, Ruoyao and Jansen, Peter and C{\^o}t{\'e}, Marc-Alexandre and Ammanabrolu, Prithviraj},
  journal={URL https://arxiv. org/abs/2203.07540},
  year={2022}
}

@article{yao2022webshop,
  title={Webshop: Towards scalable real-world web interaction with grounded language agents},
  author={Yao, Shunyu and Chen, Howard and Yang, John and Narasimhan, Karthik},
  journal={Advances in Neural Information Processing Systems},
  volume={35},
  pages={20744--20757},
  year={2022}
}

@article{schaul2015prioritized,
  title={Prioritized experience replay},
  author={Schaul, Tom and Quan, John and Antonoglou, Ioannis and Silver, David},
  journal={arXiv preprint arXiv:1511.05952},
  year={2015}
}

@article{loshchilov2015online,
  title={Online batch selection for faster training of neural networks},
  author={Loshchilov, Ilya and Hutter, Frank},
  journal={arXiv preprint arXiv:1511.06343},
  year={2015}
}

@inproceedings{katharopoulos2018not,
  title={Not all samples are created equal: Deep learning with importance sampling},
  author={Katharopoulos, Angelos and Fleuret, Fran{\c{c}}ois},
  booktitle={International conference on machine learning},
  pages={2525--2534},
  year={2018},
  organization={PMLR}
}

@inproceedings{mindermann2022prioritized,
  title={Prioritized training on points that are learnable, worth learning, and not yet learnt},
  author={Mindermann, S{\"o}ren and Brauner, Jan M and Razzak, Muhammed T and Sharma, Mrinank and Kirsch, Andreas and Xu, Winnie and H{\"o}ltgen, Benedikt and Gomez, Aidan N and Morisot, Adrien and Farquhar, Sebastian and others},
  booktitle={International Conference on Machine Learning},
  pages={15630--15649},
  year={2022},
  organization={PMLR}
}

@article{yu2026dapo,
  title={Dapo: An open-source llm reinforcement learning system at scale},
  author={Yu, Qiying and Zhang, Zheng and Zhu, Ruofei and Yuan, Yufeng and Zuo, Xiaochen and Yue, Yu and Dai, Weinan and Fan, Tiantian and Liu, Gaohong and Liu, Lingjun and others},
  journal={Advances in Neural Information Processing Systems},
  volume={38},
  pages={113222--113244},
  year={2026}
}

@inproceedings{espeholt2018impala,
  title={Impala: Scalable distributed deep-rl with importance weighted actor-learner architectures},
  author={Espeholt, Lasse and Soyer, Hubert and Munos, Remi and Simonyan, Karen and Mnih, Vlad and Ward, Tom and Doron, Yotam and Firoiu, Vlad and Harley, Tim and Dunning, Iain and others},
  booktitle={International conference on machine learning},
  pages={1407--1416},
  year={2018},
  organization={PMLR}
}

@article{fei2025unleashing,
  title={Unleashing embodied task planning ability in llms via reinforcement learning},
  author={Fei, Zhaoye and Ji, Li and Wang, Siyin and Shi, Junhao and Gong, Jingjing and Qiu, Xipeng},
  journal={arXiv preprint arXiv:2506.23127},
  year={2025}
}

@article{xiao2026mimo,
  title={{MiMo-V2-Flash} technical report},
  author={Xiao, Bangjun and Xia, Bingquan and Yang, Bo and Gao, Bofei and Shen, Bowen and Zhang, Chen and He, Chenhong and Lou, Chiheng and Luo, Fuli and Wang, Gang and others},
  journal={arXiv preprint arXiv:2601.02780},
  year={2026}
}

@article{rang2026near,
  title={Near-Policy: Accelerating On-Policy Distillation via Asynchronous Generation and Selective Packing},
  author={Rang, Miao and Bi, Zhenni and Zhou, Hang and Han, Kai and Wang, Xuechun and Xiao, An and Chen, Xinghao and Wang, Yunhe and Chen, Hanting},
  journal={arXiv preprint arXiv:2605.05940},
  year={2026}
}

@article{chen2026boldsymbol,
  title={{f}-OPD: Stabilizing Long-Horizon On-Policy Distillation with Freshness-Aware Control},
  author={Chen, Xianwei and Zhang, Shimin and Wu, Jibin},
  journal={arXiv preprint arXiv:2605.17862},
  year={2026}
}

@article{gao2025beyond,
  title={Beyond ten turns: Unlocking long-horizon agentic search with large-scale asynchronous {RL}},
  author={Gao, Jiaxuan and Fu, Wei and Xie, Minyang and Xu, Shusheng and He, Chuyi and Mei, Zhiyu and Zhu, Banghua and Wu, Yi},
  journal={arXiv preprint arXiv:2508.07976},
  year={2025}
}

@article{zhou2025april,
  title={{APRIL}: Active partial rollouts in reinforcement learning to tame long-tail generation},
  author={Zhou, Yuzhen and Li, Jiajun and Su, Yusheng and Ramesh, Gowtham and Zhu, Zilin and Long, Xiang and Zhao, Chenyang and Pan, Jin and Yu, Xiaodong and Wang, Ze and others},
  journal={arXiv preprint arXiv:2509.18521},
  year={2025}
}

@article{hu2026dora,
  title={{DORA}: A scalable asynchronous reinforcement learning system for language model training},
  author={Hu, Tianhao and Liu, Xiangcheng and Miao, Yuchun and Xiao, Youshao and Zang, Hongyu and Zheng, Yang and Huang, Xuan and Ding, Jinrui and Zhang, Yufei and Yang, Yu and others},
  journal={arXiv preprint arXiv:2604.26256},
  year={2026}
}

@article{gao2026open,
  title={Open-{MOPD}: Diagnosing and Fixing Capability Imbalance in Multi-Teacher On-Policy Distillation},
  author={Gao, Huan-ang and Chi, Haohan and Yan, Yong and Feng, Shiyuan and Wu, Hanlin and Jiang, Zheng and He, Bingxiang and Ma, Wei-Ying and Zhang, Ya-Qin and Zhou, Hao},
  journal={arXiv preprint arXiv:2608.19098},
  year={2026}
}
\endgroup

\clearpage
\appendix

\setlength{\parskip}{1.2pt plus 0.3pt minus 0.2pt}
\setlength{\textfloatsep}{6pt plus 1pt minus 1pt}
\setlength{\floatsep}{5pt plus 1pt minus 1pt}
\setlength{\intextsep}{5pt plus 1pt minus 1pt}
\setlength{\abovecaptionskip}{3pt}

\noindent\begin{minipage}{\linewidth}
\centering
\small
\textbf{Appendix Contents}\\[2pt]
\renewcommand{\arraystretch}{1.08}
\begin{tabularx}{\linewidth}{@{}X@{\hspace{1.5em}}X@{}}
\hyperref[app:sampler-algorithm]{\textbf{A}\quad Sampler-side batch composition}\dotfill\pageref{app:sampler-algorithm}
& \hyperref[app:fixed-budget]{\textbf{B}\quad Fixed-budget results}\dotfill\pageref{app:fixed-budget} \\
\hyperref[app:main-results-full]{\textbf{C}\quad Full main results}\dotfill\pageref{app:main-results-full}
& \hyperref[app:batch-diagnostics]{\textbf{D}\quad Batch-composition diagnostics}\dotfill\pageref{app:batch-diagnostics} \\
\hyperref[app:opd-objective]{\textbf{E}\quad OPD objective and analysis}\dotfill\pageref{app:opd-objective}
& \hyperref[app:cost]{\textbf{F}\quad Time and compute analysis}\dotfill\pageref{app:cost} \\
\hyperref[app:ablation]{\textbf{G}\quad Ablation studies}\dotfill\pageref{app:ablation}
& \\
\end{tabularx}
\end{minipage}
\vspace{2pt}

\section{Sampler-Side Batch Composition Procedure}
\label{app:sampler-algorithm}

Algorithm~\ref{alg:divopd-sampler} gives the reported procedure. After each
student turn, the frozen teacher scores the same response and the workflow
stores $s(u)$ from Equation~\ref{eq:learning-need}; the learner still receives
an ordinary OPD batch with unit per-turn weights.

$\textsc{Valid}(u)$ requires a complete prompt, tokens, teacher log
probabilities, and a nonempty action mask. When present, the teacher-valid mask
must align with and overlap the action mask; a missing mask is treated as
all-valid. Turns with inference or mask-alignment failures are discarded
because waiting cannot make their supervision usable.

\begin{algorithm}[H]
\caption{\raggedright\textbf{DivOPD sampler (\texttt{trajectory\_first} with
\texttt{turn\_signal=raw\_kl\_sum}).} Table~\ref{tab:sampler-config}
lists the reported numerical settings.}
\label{alg:divopd-sampler}
\small
\setlength{\tabcolsep}{0pt}
\renewcommand{\arraystretch}{1.08}
\begin{tabularx}{\linewidth}{@{}r@{\hspace{0.7em}}X@{}}
\toprule
& \textbf{Input:} update index $q$; pending turns $P$; rollout buffer
$\mathcal{E}$; batch size $B$; pool multiplier $c$; initial per-rollout cap
$k$; maximum staleness $\delta$; maximum pending age $W$.
\\
1 & Set $v_{\min}\gets\max(q-\delta,0)$ and remove every
$u\in P$ with $\operatorname{version}(u)<v_{\min}$. \\
2 & Read $F$ from $\mathcal{E}$ with version at least $v_{\min}$ until
$|P|+|F|=cB$ (or the read returns no more turns); set $C\gets P\cup F$. \\
3 & Set $V\gets\{u\in C:\textsc{Valid}(u)\}$ and permanently discard
$C\setminus V$. Stamp a fresh turn's first-seen update as $q$. \\
4 & Group $V$ by rollout identifier $r$. Order rollout groups by oldest
first-seen update, breaking ties by $r$. \\
5 & Within every group, sort turns by descending $s(u)$, breaking ties by
$(r,t)$. Initialize the selected set $S\gets\varnothing$ and cap $h\gets k$.
\\
6 & Sweep the ordered rollout groups once. From each group append its highest
ranked unselected turns until that group contributes $k$ turns or $|S|=B$.
\\
7 & \textbf{while} $|S|<B$ and an unselected turn remains in $V$:
set $h\gets h+1$ and sweep the groups again, adding the next-ranked turns
up to cap $h$. \\
8 & Set $P'\gets\{u\in V\setminus S:q-\operatorname{firstseen}(u)\le W\}$;
discard older unselected turns. \\
9 & \textbf{return} $S$ to the unchanged OPD learner
and retain $P'$ for update $q+1$. \\
\bottomrule
\end{tabularx}
\end{algorithm}

With $B=64$ and $k=4$, the first sweep covers at least
$\lceil B/k\rceil=16$ rollouts whenever the valid pool contains that many
groups. If the first sweep does not fill the batch, later passes raise the cap
and select more turns. The batch is short only when the pool contains fewer
than $B$ valid turns. Unselected valid turns remain pending for at most $W=8$
updates, while turns more than $\delta=2$
model versions old are removed before composition.

The reported configuration sets \texttt{weight\_mode=none} and
\texttt{task\_balance=false}. Within each rollout, selection is deterministic
given the scores and allocated slots. We do not use importance weights to
recover a uniform-turn objective. Selection changes the data-weighted
objective, while the per-turn OPD loss, reduction, and optimizer stay the same.

\begin{table}[!htbp]
\caption{\textbf{Shared sampler configuration.} All methods use the same
learner batch size. DivOPD-base and DivOPD share these sampler values; only
their within-rollout selection differs.}
\label{tab:sampler-config}
\centering
\small
\setlength{\tabcolsep}{6pt}
\lighttablestyle
\begin{tabularx}{\linewidth}{@{}lX@{}}
\toprule
\textbf{Setting} & \textbf{Value} \\
\midrule
Learner batch size $B$ & 64 turn rows \\
Candidate-pool multiplier $c$ & 4 (at most 256 pending and fresh turns) \\
Maximum policy staleness $\delta$ & 2 model versions \\
Maximum pending age $W$ & 8 learner updates \\
Initial per-rollout cap $k$
& ALFWorld 1.7B/4B: 4/4; WebShop 3B/7B: 5/4;
ScienceWorld 1.5B/3B: 4/5 \\
Per-turn loss settings & $\beta=1$; unit row weights \\
Task balancing & Disabled \\
\bottomrule
\end{tabularx}
\end{table}

\begin{table}[!htbp]
\caption{\textbf{Shared optimizer configuration.} These settings are fixed
across all methods.}
\label{tab:optimizer-config}
\centering
\small
\setlength{\tabcolsep}{6pt}
\lighttablestyle
\begin{tabularx}{\linewidth}{@{}lX@{}}
\toprule
\textbf{Setting} & \textbf{Value} \\
\midrule
Optimizer & AdamW \\
Learning rate & $10^{-6}$; constant schedule; no warmup \\
Adam moments & $\beta_1=0.9,\ \beta_2=0.999$ \\
Weight decay & $0.01$ \\
Gradient-norm clipping & $1.0$ \\
Optimizer batch & $N=64$ turn rows; seq-mean-token-mean
  (Equation~\ref{eq:opd-loss}) \\
\bottomrule
\end{tabularx}
\end{table}

\begin{table}[!htbp]
\caption{\textbf{Student initialization, frozen teachers, and evaluation protocol.}
Initial SR is measured before the first learner update under the reported
evaluation protocol; teacher SR uses the same fixed test episodes. Training responses are
sampled at temperature $1.0$, and evaluation uses temperature $0.4$.}
\label{tab:environment-config}
\centering
\small
\setlength{\tabcolsep}{5pt}
\lighttablestyle
\begin{tabularx}{\linewidth}{@{}lXr@{}}
\toprule
\textbf{Environment} & \textbf{Student} & \textbf{Initial SR (\%)} \\
\midrule
ALFWorld     & Qwen3-1.7B         &  7.86 \\
ALFWorld     & Qwen3-4B           & 26.43 \\
WebShop      & Qwen2.5-3B-Instruct &  1.56 \\
WebShop      & Qwen2.5-7B-Instruct & 14.06 \\
ScienceWorld & Qwen2.5-1.5B-Instruct & 21.48 \\
ScienceWorld & Qwen2.5-3B-Instruct & 45.31 \\
\bottomrule
\end{tabularx}

\vspace{5pt}

\begin{tabularx}{\linewidth}{@{}lXrr@{}}
\toprule
\textbf{Environment} & \textbf{Frozen teacher} & \textbf{Teacher SR (\%)} & \textbf{Episodes} \\
\midrule
ALFWorld & GiGPO-Qwen2.5-7B-Instruct-ALFWorld & 87.86 & 140 \\
WebShop & GiGPO-Qwen2.5-7B-Instruct-WebShop & 85.94 & 128 \\
ScienceWorld & Embodied-Planner-R1~\citep{fei2025unleashing} & 93.75 & 256 \\
\bottomrule
\end{tabularx}
\end{table}

\begin{table}[H]
\caption{\textbf{ScienceWorld task-type split.} Training and evaluation use
disjoint task types. Within each type, we retain the first half of the official
variation IDs before applying a fixed shuffle.}
\label{tab:scienceworld-split}
\centering
\footnotesize
\setlength{\tabcolsep}{5pt}
\renewcommand{\arraystretch}{1.02}
\lighttablestyle
\begin{tabularx}{\linewidth}{@{}>{\raggedright\arraybackslash}X>{\raggedright\arraybackslash}X@{}}
\toprule
\textbf{Training task types (17)} & \textbf{Held-out evaluation task types (13)} \\
\midrule
boil; melt; change-the-state-of-matter-of; use-thermometer;
measure-melting-point-known-substance; power-component; test-conductivity;
find-living-thing; find-plant; grow-plant; chemistry-mix;
chemistry-mix-paint-secondary-color; lifespan-shortest-lived;
identify-life-stages-2; inclined-plane-determine-angle;
inclined-plane-friction-named-surfaces; mendelian-genetics-known-plant
&
freeze; measure-melting-point-unknown-substance;
power-component-renewable-vs-nonrenewable-energy;
test-conductivity-of-unknown-substances; find-non-living-thing; find-animal;
grow-fruit; chemistry-mix-paint-tertiary-color; lifespan-longest-lived;
lifespan-longest-lived-then-shortest-lived; identify-life-stages-1;
inclined-plane-friction-unnamed-surfaces; mendelian-genetics-unknown-plant
\\
\bottomrule
\end{tabularx}
\end{table}

\textbf{ScienceWorld subset construction.}
For a task type with $N$ official variations, we retain variation IDs
$0,\ldots,\lfloor N/2\rfloor-1$ and shuffle each split once with seed 42. This
produces 2,294 training instances and 1,308 instances from held-out task
types. Training cycles sequentially through the fixed shuffled training file.
Every evaluation uses the same first 256 instances of the fixed shuffled
held-out file; the frozen-teacher and initial-student SRs use these episodes as
well. Thus, no ScienceWorld task type is shared between training and
evaluation.

Evaluation processes each test set in order every five explorer steps
(every 20 for the ALFWorld focus-only runs), giving 24--49 evaluations per run.
The maximum interaction horizon is 30 turns on ALFWorld and ScienceWorld and 15
on WebShop. Runs are configured for 250 trainer updates on ALFWorld and
ScienceWorld and 150 on WebShop; exceptions are focus-only on ScienceWorld 3B
(150), WebShop 3B (130), and WebShop 7B (111), and DivOPD-base (138), DivOPD
(134), and TurnOPD (132) on WebShop 7B.
The matched-access cap comparison in Table~\ref{tab:matched-access-cap}
uses seed 42 for both arms in each setting.

\subsection{Reference variant and comparison scope}
\label{sec:variants}

DivOPD-base applies the validity gate and rollout-first coverage with uniform
within-rollout sampling. DivOPD uses the same allocation rules and adds focus;
DivOPD$+$R additionally changes rollout generation through recovery.
These variants share the numerical sampler settings in
Table~\ref{tab:sampler-config}, unit per-turn weights, and no task balancing.

DivOPD-focus is an earlier implementation. It has no validity gate, visits
task groups in seeded random order, ranks turns by disagreement, caps a group
at three turns and a task's token share at $25\%$ on the first pass, removes
the turn cap when relaxing, and enables per-rollout task-balance loss weights.
Some runs are shorter and the ALFWorld evaluation frequency differs, as noted
above. Its comparison with DivOPD is therefore descriptive.

Focus-only has the highest per-batch effective rollout count in five settings,
while DivOPD has higher peak SR in five
(Table~\ref{tab:coverage-cost-full}). DivOPD with $k=3$ also exceeds focus-only
in peak SR in all six settings (mean $+4.9$ points), but gating, ordering,
relaxation, and weighting differ. These comparisons do not isolate the cap
or focus; the matched controls in Section~\ref{sec:ablation-main} do.

\section{Fixed-Budget Efficiency}
\label{app:fixed-budget}

We reconstruct evaluation curves from the archived explorer logs, map each
evaluation to its learner policy version through \texttt{rollout/model\_version},
and linearly interpolate the unsmoothed success rates. Explorer log steps are
not learner updates and are not used as the budget axis. For each setting,
$U$ is the smallest final evaluated learner version among vanilla OPD,
TCOD-F2B, TurnOPD, DivOPD-base, DivOPD, and DivOPD$+$R. The normalized area
is $\mathrm{nAUC}=U^{-1}\int_0^U\mathrm{SR}(q)\,dq$ over these interpolated
curves; Figure~\ref{fig:fixed-budget} (Section~\ref{sec:efficiency}) plots four
of the methods over the shared budget.

The aligned audit is reproduced by \texttt{plot\_final\_narrative.py} from the
supplied logs.

\section{Full Per-Setting Main Results}
\label{app:main-results-full}

Appendix Table~\ref{tab:main-results-full} expands Table~\ref{tab:main-results}
with update counts and absolute trainer GPU-hours for every method and setting.

\fullmainresultstable

\section{Full Batch-Composition Diagnostics}
\label{app:batch-diagnostics}

\begin{figure}[ht]
\centering
\includegraphics[width=0.94\linewidth]{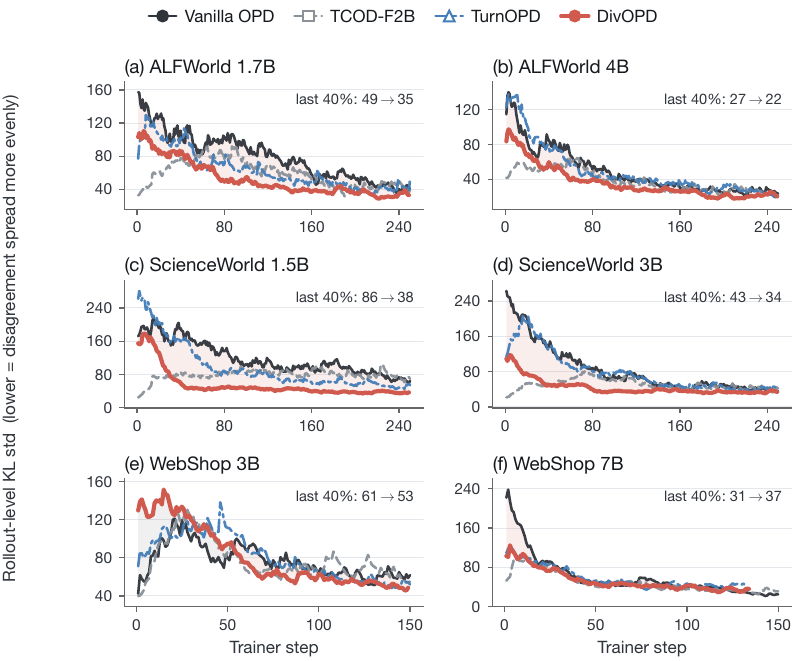}
\caption{\textbf{Cross-rollout disagreement in all six settings} (main-text
Figure~\ref{fig:kl-stability} shows panels a, c, and e). Curves are the standard deviation of
rollout-summed student-minus-teacher log probabilities (EMA, $\alpha=0.12$); each
panel gives the mean over the last $40\%$ of training (vanilla $\to$ DivOPD). Red
shading marks where DivOPD is lower and gray where it is higher. Variation falls
on ALFWorld and ScienceWorld.}
\label{fig:kl-stability-all}
\end{figure}

\begin{table}[!htbp]
\caption{\textbf{What DivOPD changes.} The core method changes batch selection;
only the optional recovery extension changes rollout generation.}
\label{tab:layers}
\centering
\small
\lighttablestyle
\begin{tabularx}{\linewidth}{@{}>{\raggedright\arraybackslash}p{0.40\linewidth}
  >{\raggedright\arraybackslash}p{0.20\linewidth}
  >{\raggedright\arraybackslash}X@{}}
\toprule
\textbf{Design question} & \textbf{Component} & \textbf{DivOPD} \\
\midrule
How is each turn scored? & Learning signal & Unchanged (Eq.~\ref{eq:opd-adv}) \\
How long can a rollout continue? & Rollout horizon & Unchanged \\
Who acts during the rollout? & Rollout policy & Changed only by DivOPD$+$R (\S\ref{sec:recovery}) \\
Which turns enter a learner batch? & Batch selection & Changed by DivOPD (\S\ref{sec:method}) \\
\bottomrule
\end{tabularx}
\end{table}

Table~\ref{tab:layers} distinguishes batch selection from changes to the
learning signal or rollout policy. Agent-RL methods such as Text2Grad, GiGPO,
BiPACE, and SkillRL change the feedback extracted from interaction
\citep{wang2026text2grad,feng2026group,wang2026bipace,xia2026skillrl}.
Table~\ref{tab:coverage-cost-full} reports the complete six-setting audit behind
the diagnostics summarized in Figure~\ref{fig:clustering}; the online
cross-rollout disagreement diagnostic appears in Figure~\ref{fig:kl-stability}.

\clearpage
\begin{table}[H]
\caption{\textbf{Full batch-composition diagnostics.} Values are averaged over
the final $40\%$ of training; peak SR is repeated from Appendix
Table~\ref{tab:main-results-full}. Focus-only maximizes coverage in five settings,
while DivOPD has the higher peak SR in five. Bold marks the best peak SR among
the five methods shown, which exclude DivOPD$+$R. Figure~\ref{fig:clustering}
averages the first three diagnostics over the two student sizes per environment.}
\label{tab:coverage-cost-full}
\centering
\scriptsize
\setlength{\tabcolsep}{3.6pt}
\renewcommand{\arraystretch}{1.00}
\begin{tabular}{@{}llrrrrrr@{}}
\toprule
\tabhead
\textbf{Setting} & \textbf{Method}
 & \textbf{Eff.\ rollouts} & \textbf{Top-1 tok.}
 & \textbf{Turns per} & \textbf{KL / token}
 & \textbf{Step time} & \textbf{Peak SR} \\
\tabhead
& & \textbf{/ batch}\,$\uparrow$ & \textbf{share}\,$\downarrow$ & \textbf{rollout}
 & $\downarrow$ & (s)\,$\downarrow$ & (\%)\,$\uparrow$ \\
\midrule
ALFWorld 4B
 & Vanilla OPD  &  6.7 & 0.281 &  8.1 & 0.121 & 94.9 & 86.43 \\
& TCOD-F2B     &  6.5 & 0.291 &  8.6 & 0.114 & 99.6 & 87.14 \\
& DivOPD-focus & 18.6 & 0.110 &  2.4 & 0.174 & 85.2 & 89.29 \\
& DivOPD-base  & 12.5 & 0.136 &  3.9 & 0.111 & 97.8 & \bst{92.14} \\
& \textbf{DivOPD} & 12.5 & 0.145 &  3.9 & 0.176 & 82.1 & 91.43 \\
\addlinespace[3pt]
ALFWorld 1.7B
 & Vanilla OPD  &  5.2 & 0.364 & 10.2 & 0.222 & 51.2 & 77.86 \\
& TCOD-F2B     &  5.4 & 0.346 & 10.0 & 0.200 & 49.0 & 72.86 \\
& DivOPD-focus & 16.1 & 0.120 &  3.0 & 0.280 & 40.3 & \bst{80.00} \\
& DivOPD-base  & 12.4 & 0.132 &  3.9 & 0.201 & 44.4 & 74.29 \\
& \textbf{DivOPD} & 12.4 & 0.140 &  3.9 & 0.289 & 39.5 & 79.29 \\
\addlinespace[3pt]
WebShop 3B
 & Vanilla OPD  &  4.7 & 0.334 &  7.8 & 0.179 & 106.3 & 60.94 \\
& TCOD-F2B     &  5.0 & 0.328 &  7.4 & 0.167 & 101.9 & 67.19 \\
& DivOPD-focus & 25.7 & 0.069 &  2.0 & 0.181 & 64.6 & 70.31 \\
& DivOPD-base  & 12.4 & 0.114 &  4.7 & 0.151 & 76.7 & \bst{78.12} \\
& \textbf{DivOPD} & 12.3 & 0.114 &  4.7 & 0.156 & 70.9 & \bst{78.12} \\
\addlinespace[3pt]
WebShop 7B
 & Vanilla OPD  &  8.3 & 0.237 &  6.1 & 0.035 & 126.7 & 87.50 \\
& TCOD-F2B     &  7.5 & 0.271 &  6.2 & 0.029 & 120.1 & 89.06 \\
& DivOPD-focus & 29.2 & 0.067 &  1.5 & 0.052 & 90.2 & 89.84 \\
& DivOPD-base  & 14.8 & 0.101 &  3.9 & 0.027 & 104.5 & 88.28 \\
& \textbf{DivOPD} & 15.0 & 0.096 &  3.9 & 0.036 & 96.3 & \bst{90.62} \\
\addlinespace[3pt]
ScienceWorld 3B
 & Vanilla OPD  &  6.2 & 0.272 &  9.5 & 0.241 & 18.9 & 85.55 \\
& TCOD-F2B     &  6.2 & 0.253 &  9.3 & 0.242 & 19.3 & 87.89 \\
& DivOPD-focus & 14.6 & 0.115 &  3.9 & 0.487 & 22.7 & 77.34 \\
& DivOPD-base  & 11.3 & 0.146 &  4.7 & 0.257 & 21.1 & 86.72 \\
& \textbf{DivOPD} & 11.8 & 0.137 &  4.7 & 0.336 & 19.1 & \bst{88.28} \\
\addlinespace[3pt]
ScienceWorld 1.5B
 & Vanilla OPD  &  3.9 & 0.400 & 12.9 & 0.349 & 15.2 & 66.02 \\
& TCOD-F2B     &  4.1 & 0.383 & 12.5 & 0.341 & 14.2 & 71.88 \\
& DivOPD-focus & 12.8 & 0.121 &  4.3 & 0.513 & 14.1 & 66.80 \\
& DivOPD-base  & 12.1 & 0.130 &  4.4 & 0.341 & 16.3 & 70.31 \\
& \textbf{DivOPD} & 13.5 & 0.115 &  4.0 & 0.483 & 13.8 & \bst{78.91} \\
\bottomrule
\end{tabular}
\end{table}

\section{Executed OPD Objective}
\label{app:opd-objective}

Vanilla OPD, DivOPD-base, DivOPD, and the matched-cap controls use unit row
weights with the same seq-mean-token-mean reduction. Its scalar value is
\begin{equation}
\mathcal{L}_{\mathrm{OPD}}^{\mathrm{value}}
= \frac{1}{N}\sum_{t=1}^{N}\ell_t,
\qquad
\ell_t =
\begin{cases}
-\dfrac{1}{|V_t|}\displaystyle\sum_{i\in V_t} A_{t,i}, & |V_t|>0,\\[4pt]
0, & |V_t|=0.
\end{cases}
\label{eq:opd-loss}
\end{equation}
\paragraph{Stored scores and recomputed learner probabilities.}
The workflow stores student log probabilities at the generating policy
$\theta_{v(u)}$ and frozen-teacher log probabilities on the same response.
Equation~\ref{eq:learning-need} uses these cached values for selection.
Before update $q$, the learner recomputes student log probabilities at its
current snapshot $\bar\theta_q$, which is $\theta_{\mathrm{old}}$ in
Equation~\ref{eq:opd-adv}; teacher scores remain fixed.
The PPO surrogate uses
\[
r_{t,i}(\theta)=\exp\!\left[
\log\pi_\theta(y_{t,i}\mid h_{t,<i})
-\operatorname{sg}\!\left[\log\pi_{\bar\theta_q}(y_{t,i}\mid h_{t,<i})\right]
\right].
\]
At gradient evaluation, $\theta=\bar\theta_q$, so $r_{t,i}=1$ and clipping is
inactive, but $\nabla_\theta r_{t,i}$ is not zero. Each batch receives one
optimizer update. There is no additional behavior-to-learner importance
correction: bounded staleness limits the sampling mismatch but does not
remove it. Vanilla OPD and the controlled composer variants share this
update rule. The outer average includes dead rows, and the gradient is
\begin{equation}
\nabla_\theta \mathcal{L}_{\mathrm{OPD}}
= \frac{1}{N}\sum_{t:\,|V_t|>0}\frac{1}{|V_t|}
\sum_{i\in V_t}-A_{t,i}\,
\nabla_\theta\log\pi_{\theta}(y_{t,i}\mid h_{t,<i}).
\label{eq:opd-gradient}
\end{equation}
Thus valid tokens are averaged within each turn row, rows are equally weighted,
and a dead row contributes zero while remaining in the $N$-row denominator.
The formula describes the executed update on selected, potentially lagged
samples; it is not an unbiased current-policy reverse-KL gradient.

\subsection{Gradient effect of batch composition}
\label{app:removal}
\begingroup
\small

Because Equation~\ref{eq:opd-gradient} is a plain average of per-row terms,
its selection-induced change can be written exactly for a fixed pool.
These equations describe reweighting, not a guarantee of better policy
performance. Let the candidate pool hold
$N_c$ turns and write the contribution of turn $t$ as
$g_t=\frac{1}{|V_t|}\sum_{i\in V_t}-A_{t,i}\nabla_\theta\log\pi_\theta(y_{t,i}\mid h_{t,<i})$,
with $g_t=0$ for a dead row. The pool gradient is $G=\frac{1}{N_c}\sum_t g_t$.
A composer selects $B$ rows through indicators $m_t\in\{0,1\}$ with
$\sum_t m_t=B$; let $\pi_t=\mathbb{E}[m_t]$ be the inclusion probability, so
$\sum_t\pi_t=B$. The optimizer sees $\hat G=\frac{1}{B}\sum_t m_t g_t$.

\begin{lemma}[Selection-induced shift identity]
\label{lem:bias}
For any full composer batch, $\mathbb{E}[\hat G]-G
=\frac{N_c}{B}\,\mathrm{Cov}_t(\pi_t,g_t)$, where this is the scalar--vector
covariance over a uniformly drawn pool row.
\end{lemma}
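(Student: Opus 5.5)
The plan is a direct computation from linearity of expectation, with the pool, and hence every $g_t$, held fixed. Only the selection indicators $m_t$ are random, so $\mathbb{E}[\hat G]=\frac{1}{B}\sum_t \pi_t g_t$. The word ``full'' in the statement guarantees $\sum_t m_t=B$ deterministically. Taking expectations gives $\sum_t\pi_t=B$, which identifies the mean inclusion probability over a uniformly drawn pool row as $\bar\pi=\frac{1}{N_c}\sum_t\pi_t=B/N_c$.

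Next I would expand the covariance over a uniform row $t\in\{1,\ldots,N_c\}$:
\[
\mathrm{Cov}_t(\pi_t,g_t)=\frac{1}{N_c}\sum_t \pi_t g_t-\bar\pi\,\bar g .
\]
Here $\bar g=\frac{1}{N_c}\sum_t g_t=G$ by definition of the pool gradient, and the covariance is the population (not sample) version, taken coordinatewise in the vector argument. Multiplying by $N_c/B$ gives
\[
\frac{N_c}{B}\,\mathrm{Cov}_t(\pi_t,g_t)=\frac{1}{B}\sum_t\pi_t g_t-\frac{N_c}{B}\cdot\frac{B}{N_c}\,G=\mathbb{E}[\hat G]-G,
\]
which is the claimed identity.

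The only point needing care is bookkeeping, not analysis. Dead rows must be kept in the pool with $g_t=0$ so that both $G$ and the covariance average over all $N_c$ rows. The optimizer denominator must be $B$, which matches Equation~\ref{eq:opd-gradient} with $N=B$ for a full batch. Finally, the $g_t$ must not depend on the selection; they are all evaluated at the same learner snapshot $\bar\theta_q$, so this holds.

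I would also note the immediate corollary: any composer with constant $\pi_t=B/N_c$, such as uniform sampling without replacement, is unbiased for $G$. This makes clear that the shift is entirely driven by correlation between inclusion and row gradient.
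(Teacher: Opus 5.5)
Your proposal is correct and follows the paper's proof essentially line for line. Linearity gives $\mathbb{E}[\hat G]=\frac{1}{B}\sum_t\pi_t g_t$, fullness gives $\mathbb{E}_t[\pi_t]=B/N_c$, and subtracting $G=\frac{N_c}{B}\mathbb{E}_t[\pi_t]\mathbb{E}_t[g_t]$ yields the covariance term. Your bookkeeping remarks (dead rows carried with $g_t=0$, denominator $B$, and $g_t$ fixed independently of the selection) and the unbiasedness corollary for constant $\pi_t$ are accurate additions that the paper leaves implicit.
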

\begin{proof}
$\mathbb{E}[\hat G]=\frac{1}{B}\sum_t\pi_t g_t=\frac{N_c}{B}\,\mathbb{E}_t[\pi_t g_t]$.
Since $\mathbb{E}_t[\pi_t]=B/N_c$, also $G=\mathbb{E}_t[g_t]=\frac{N_c}{B}\,\mathbb{E}_t[\pi_t]\,\mathbb{E}_t[g_t]$.
Subtracting gives $\frac{N_c}{B}\big(\mathbb{E}_t[\pi_t g_t]-\mathbb{E}_t[\pi_t]\mathbb{E}_t[g_t]\big)$.
\end{proof}

\paragraph{Score--loss relation at a common policy version.}
At the same student policy version $v$, the common valid mask gives
\begin{equation}
s_v(u)=-\frac{1}{\beta}\sum_{i\in V_u}A^{(v)}_{u,i}
=\frac{|V_u|}{\beta}\,\ell_u^{\mathrm{value},(v)}.
\label{eq:score-loss-equivalence}
\end{equation}
With a complete response mask, its conditional expectation is the sequence-level
reverse KL, $D_{\mathrm{KL}}(P_{\theta_v}(\cdot\mid h)\|P_\phi(\cdot\mid h))$.
Thus raw $s_v$ ranks token count times row-mean OPD loss, while $s_v/|V_u|$
ranks the row loss. Neither is the gradient norm because score-gradient vectors
can vary and cancel. Cached scores use rollout version $v$; learner-side
recomputation can be up to $\delta$ versions later. The equality therefore
does not identify the cached score with the row loss evaluated at the later
learner snapshot.

The identity shows how each stage changes the expected gradient. The gate
removes rows with $g_t=0$ in the audited buffers. If the remaining rows are
sampled uniformly, this increases the expected gradient magnitude without
changing its direction, as shown below. Coverage limits how much rollout
length affects $\pi_t$, moving from turn-uniform toward rollout-uniform
averaging. Focus then favors turns with larger $s(u)$ within each selected
rollout. Section~\ref{app:geometry} measures the resulting gradient changes.

The following sampling calculation isolates dead-slot dilution under
uniform sampling. Its variance result uses simplifying assumptions: rollout-correlated
gradients and deterministic selection need not satisfy its independence
assumptions.

\begin{proposition}[Dead-slot dilution under uniform sampling]
\label{prop:gate-dilution}
If a pool contains $D$ dead rows and the gate samples uniformly from its
$N_c-D$ live rows, then
$\mathbb{E}[\hat G_{\mathrm{gate}}]=\frac{N_c}{N_c-D}G$: the gate preserves
direction while undoing dead-slot shrinkage. Further, suppose live-row
gradients are i.i.d. with mean $\mu$ and covariance $\Sigma$, and independent
slot validity is $Z\sim\mathrm{Bernoulli}(1-d)$, where $d=D/N_c$ is the
dead-row fraction. For the same valid-row target,
\begin{equation}
\mathrm{Cov}(\hat\mu_{\mathrm{gate}})=\frac{\Sigma}{B},\qquad
\mathrm{Cov}\!\left(\frac{\hat G_{\mathrm{ungated}}}{1-d}\right)
=\frac{\Sigma+d\mu\mu^\top}{B(1-d)}.
\label{eq:gate-variance}
\end{equation}
Hence gating is a positive-semidefinite variance reduction; at $\mu=0$, its
variance is exactly $(1-d)$ times the validity-corrected ungated variance.
\end{proposition}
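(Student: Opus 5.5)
The plan has two parts: first the mean claim, which follows directly from Lemma~\ref{lem:bias}, then the covariance claim, computed in a simplified i.i.d.\ slot model. For the mean, take the gate to sample $B$ of the $N_c-D$ live rows uniformly, so $\pi_t=B/(N_c-D)$ for each live row and $\pi_t=0$ for each dead row. Then
\[
\mathbb{E}[\hat G_{\mathrm{gate}}]=\frac{1}{B}\sum_t\pi_t g_t=\frac{1}{N_c-D}\sum_{t\ \mathrm{live}}g_t .
\]
Dead rows have $g_t=0$, so $\sum_{t\ \mathrm{live}}g_t=\sum_t g_t=N_cG$. This gives $\mathbb{E}[\hat G_{\mathrm{gate}}]=\frac{N_c}{N_c-D}G$, a positive multiple of $G$, so the direction is preserved. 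As a consistency check, Lemma~\ref{lem:bias} gives the same answer, because $\mathrm{Cov}_t(\pi_t,g_t)$ is proportional to $G$.

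For the variance, I would model the $B$ slots as independent. In the gated batch every slot is live and holds an i.i.d.\ gradient $X_j$ with mean $\mu$ and covariance $\Sigma$. So $\hat\mu_{\mathrm{gate}}=\frac1B\sum_j X_j$ has covariance $\Sigma/B$.

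In the ungated batch, slot $j$ contributes $Z_jX_j$, with $Z_j\sim\mathrm{Bernoulli}(1-d)$ independent of $X_j$. The key step is the single-slot covariance, computed by the law of total covariance or by second moments:
\[
\mathbb{E}[Z X X^\top]=(1-d)(\Sigma+\mu\mu^\top),\qquad \mathbb{E}[ZX]=(1-d)\mu .
\]
Hence
\[
\mathrm{Cov}(ZX)=(1-d)\Sigma+(1-d)d\,\mu\mu^\top .
\]
Dividing $\hat G_{\mathrm{ungated}}=\frac1B\sum_j Z_jX_j$ by $1-d$ makes it unbiased for $\mu$, which is the ``same valid-row target.'' Its covariance is then
\[
\frac{1}{B(1-d)^2}(1-d)\bigl(\Sigma+d\mu\mu^\top\bigr)=\frac{\Sigma+d\mu\mu^\top}{B(1-d)},
\]
which matches Equation~\ref{eq:gate-variance}.

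For the comparison, subtract the two covariances:
\[
\frac{\Sigma+d\mu\mu^\top}{B(1-d)}-\frac{\Sigma}{B}=\frac{d\,\Sigma+d\,\mu\mu^\top}{B(1-d)} .
\]
This is positive semidefinite because $\Sigma\succeq0$, $\mu\mu^\top\succeq0$, and $d\in[0,1)$. At $\mu=0$ the ratio of the gated to the corrected ungated covariance is exactly $1-d$.

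The main obstacle is not algebraic; it is the modeling mismatch. The first part concerns a fixed pool with exactly $D$ dead rows and sampling without replacement, while the variance part uses independent Bernoulli slot validity. I would state explicitly that the second part is a separate idealized model, which the statement's ``further, suppose'' licenses. I would also note that $d=D/N_c$ only links the two parts in expectation. Finally, I would remark that the i.i.d.\ assumption ignores rollout correlation, as the text preceding the proposition already cautions.
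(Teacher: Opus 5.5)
Your proposal is correct and follows the same route as the paper. The mean claim comes from averaging $g_t$ over the live rows and using $g_t=0$ for dead rows. The variance claim comes from $\mathrm{Cov}(ZX)=(1-d)\Sigma+d(1-d)\mu\mu^\top$, independence across the $B$ slots, and subtracting $\Sigma/B$ to get $d(\Sigma+\mu\mu^\top)/[B(1-d)]\succeq 0$. Your inclusion-probability bookkeeping, the cross-check against Lemma~\ref{lem:bias}, and your remark that the Bernoulli-slot model is a separate idealization are sound additions but do not change the argument.
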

\begin{proof}
Dead rows have $g_t=0$, so uniform live-row sampling gives
$\mathbb{E}[\hat G_{\mathrm{gate}}]=(N_c-D)^{-1}\sum_{\mathrm{live}}g_t
=N_cG/(N_c-D)$. For the stochastic statement, let $X$ denote a live-row
gradient. Then $\mathbb{E}[ZX]=(1-d)\mu$ and
$\mathrm{Cov}(ZX)=(1-d)\Sigma+d(1-d)\mu\mu^\top$. Independence across the $B$
slots yields Equation~\ref{eq:gate-variance}; subtracting $\Sigma/B$ leaves
$d(\Sigma+\mu\mu^\top)/[B(1-d)]\succeq0$.
\end{proof}
At $\mu=0$, validity-corrected ungated reading is variance-equivalent to a live-only batch
of effective size $B_{\mathrm{eff}}=(1-d)B$; for $B=64$ and
$d=0.05$--$0.28$, this is $46$--$61$ rows, whereas the gate restores all $64$.
The scale factor $1/(1-d)$ is largely suppressed before optimization in our
runs: gradient norms are clipped at $1.0$ and logged pre-clip norms are
$2$--$15$. Under the simplifying assumption of clipped SGD, the first-order comparison then
reduces to the direction of $\hat G$; AdamW's history-dependent
preconditioning prevents this from being an exact equivalence. We therefore
interpret $B_{\mathrm{eff}}$ as a sampling-efficiency statement rather than an
exact claim about optimizer step length. Empirically, gate-only
preserves mean $s(u)$ per valid ungated turn in all four replays; on the ALFWorld 1.7B DivOPD buffer,
$14.59/(1-0.1471)=17.11$. Focus is evaluated separately through the
gradient-geometry audit and downstream results.

\begin{corollary}[Coverage as first-sweep capped rollout averaging]
\label{cor:cover}
Condition on the oldest-first selected rollout set $\mathcal R_S$ and allocations
$\{n_r\}$, where $\sum_r n_r=B$. Uniform within-rollout sampling gives
\begin{equation}
\mathbb{E}[\hat G_{\mathrm{base}}\mid\mathcal R_S,\{n_r\}]
=\sum_{r\in\mathcal R_S}\frac{n_r}{B}\,\bar g_r.
\label{eq:coverage-estimand}
\end{equation}
When the batch fills in the first sweep, $n_r=\min(k,T_r)$ for each fully
allocated group; the final group may receive only the remaining slots.
If all selected $T_r\geq k$ and $B$ is divisible by $k$, the result is the
rollout-uniform mean over selected groups. If every rollout is selected in
full, it is the original turn-uniform mean.
\end{corollary}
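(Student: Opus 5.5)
The plan is to condition on the rollout set $\mathcal R_S$ and the allocations $\{n_r\}$, and then compute $\mathbb{E}[\hat G_{\mathrm{base}}]$ by linearity, exactly as in the proof of Lemma~\ref{lem:bias}.

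\textbf{Step 1: conditional inclusion probabilities.} Given $\mathcal R_S$ and $\{n_r\}$, DivOPD-base draws $n_r$ of the $T_r$ valid turns of rollout $r$ uniformly without replacement. Here $T_r$ is the number of valid pool turns of rollout $r$, and $\bar g_r=T_r^{-1}\sum_{t\in r}g_t$ is their mean contribution. The conditional inclusion probability is therefore $\pi_t=n_r/T_r$ for every $t\in r$ with $r\in\mathcal R_S$, and $\pi_t=0$ for turns of unselected rollouts. Since $\hat G=\frac1B\sum_t m_t g_t$,
\[
\mathbb{E}[\hat G_{\mathrm{base}}\mid\mathcal R_S,\{n_r\}]=\frac1B\sum_{r\in\mathcal R_S}\frac{n_r}{T_r}\sum_{t\in r}g_t=\sum_{r\in\mathcal R_S}\frac{n_r}{B}\bar g_r .
\]
This is Equation~\ref{eq:coverage-estimand}. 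The one point needing care is that this conditioning is legitimate. The oldest-first group order and the cap schedule in Algorithm~\ref{alg:divopd-sampler} depend only on first-seen stamps, rollout identifiers and group sizes, never on which turns are drawn. Hence $(\mathcal R_S,\{n_r\})$ is a deterministic function of the pool, and within-group uniform draws are independent of it.

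\textbf{Step 2: the first-sweep allocation.} If the batch fills during step~6, each group visited before the last one contributes exactly $\min(k,T_r)$ turns, because the sweep takes up to $k$ turns per group. The final group receives only $B$ minus the slots already used, which may be fewer than $k$.

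\textbf{Step 3: rollout-uniform case.} Suppose every selected $T_r\ge k$ and $k\mid B$. Then each of the $B/k$ selected groups gets exactly $n_r=k$, the last one included, because the remainder is zero. The weights become $n_r/B=1/|\mathcal R_S|$, so the expectation is $|\mathcal R_S|^{-1}\sum_r\bar g_r$, the rollout-uniform mean over selected groups.

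\textbf{Step 4: turn-uniform case.} If every rollout is selected in full, then $n_r=T_r$ and $\sum_r T_r=B$. The expectation is $\frac1B\sum_r\sum_{t\in r}g_t$, the turn-uniform mean over the selected turns. It equals the pool mean $G$ when these turns make up the whole valid pool.

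The main obstacle is bookkeeping rather than mathematics: keeping the boundary case of the last partially filled group straight, and justifying that the allocation is independent of the within-rollout uniform draws.
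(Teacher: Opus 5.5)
Your proposal is correct and takes the route the paper intends: conditional on $(\mathcal R_S,\{n_r\})$, uniform within-rollout sampling gives inclusion probabilities $\pi_t=n_r/T_r$, and the identity $\mathbb{E}[\hat G]=\frac1B\sum_t\pi_t g_t$ from the proof of Lemma~\ref{lem:bias} then gives Equation~\ref{eq:coverage-estimand} and both special cases directly. Two small precision points: Step~3 tacitly assumes the batch fills in the first sweep, since otherwise later passes can make the $n_r$ unequal; and in Step~4 the turn-uniform mean equals the paper's $G$ only when the candidate pool has no dead rows, because $G$ averages over all $N_c$ rows, including those with $g_t=0$.
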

The first sweep therefore moves between turn-uniform and rollout-uniform
averaging by capping length weights. Holding the selected set fixed separates
that reweighting from oldest-first rollout selection and gives an exact
description of the coverage stage.

\begin{corollary}[Focus reweighting identity]
\label{cor:focus}
For the same $\mathcal R_S$ and $\{n_r\}$, let $K_r$ be the deterministic
top-$n_r$ set by $s(u)$ and $D_r$ its complement. Then
\begin{equation}
\hat G_{\mathrm{focus}}
-\mathbb{E}[\hat G_{\mathrm{base}}\mid\mathcal R_S,\{n_r\}]
=\frac{1}{B}\sum_{r:n_r<T_r}
\frac{n_r(T_r-n_r)}{T_r}
\left(\bar g_r^{K}-\bar g_r^{D}\right).
\label{eq:focus-estimand}
\end{equation}
This follows from
$\bar g_r=(n_r/T_r)\bar g_r^K+((T_r-n_r)/T_r)\bar g_r^D$.
\end{corollary}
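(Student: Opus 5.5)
The identity follows from comparing two explicit sums rollout by rollout. First I fix the conditioning objects: the selected rollout set $\mathcal R_S$, the allocations $\{n_r\}$ with $\sum_r n_r=B$, and for each $r$ the per-turn contributions $g_u$, $u\in r$. The notation is $\bar g_r=T_r^{-1}\sum_{u\in r}g_u$, with $\bar g_r^K$ and $\bar g_r^D$ the means over the top-$n_r$ set $K_r$ and its complement $D_r$. Since $K_r$ is deterministic given the cached scores and the tie-breaking rule $(\mathrm{rid},t)$, the focus estimator is not random. It reads
\[
\hat G_{\mathrm{focus}}=\frac{1}{B}\sum_{r\in\mathcal R_S}\sum_{u\in K_r}g_u=\sum_{r\in\mathcal R_S}\frac{n_r}{B}\bar g_r^K .
\]
Corollary~\ref{cor:cover}, equation~\ref{eq:coverage-estimand}, gives the base side as $\sum_r (n_r/B)\bar g_r$. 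It holds because uniform sampling of $n_r$ of the $T_r$ turns includes each turn with probability $n_r/T_r$.

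Subtracting the two gives $\frac1B\sum_r n_r(\bar g_r^K-\bar g_r)$. I then apply the stated decomposition $\bar g_r=(n_r/T_r)\bar g_r^K+((T_r-n_r)/T_r)\bar g_r^D$. It holds because $K_r$ and $D_r$ partition the $T_r$ turns, with sizes $n_r$ and $T_r-n_r$. Substituting it yields
\[
\bar g_r^K-\bar g_r=\frac{T_r-n_r}{T_r}\left(\bar g_r^K-\bar g_r^D\right),
\]
and multiplying by $n_r/B$ produces the summand in equation~\ref{eq:focus-estimand}.

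The only step needing care is rollouts with $n_r=T_r$. For these, $D_r$ is empty and $\bar g_r^D$ is undefined, but the coefficient $(T_r-n_r)$ is zero and $K_r$ is the whole rollout, so $\bar g_r^K=\bar g_r$ and the rollout contributes nothing. Dropping these terms gives the restriction to $n_r<T_r$ in the sum.

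I would also note that $n_r\le T_r$ always holds, since a group cannot supply more turns than it has. This keeps every coefficient nonnegative, although nonnegativity is not needed for the identity itself.
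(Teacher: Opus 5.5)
Your proposal is correct and follows the paper's argument: you write $\hat G_{\mathrm{focus}}=\sum_{r}(n_r/B)\bar g_r^K$, subtract the coverage estimand from Corollary~\ref{cor:cover}, and substitute the partition identity $\bar g_r=(n_r/T_r)\bar g_r^K+((T_r-n_r)/T_r)\bar g_r^D$. Your treatment of the case $n_r=T_r$, where $D_r$ is empty and the coefficient is zero, spells out why the paper restricts the sum to rollouts with $n_r<T_r$.
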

Thus the change caused by focus depends on the difference between the mean
gradients of kept and dropped turns. Table~\ref{tab:dropped} also compares
their gradient norms: kept turns account for a larger share of total gradient
norm than of turn count, and tend to be earlier, higher-disagreement decisions.
\endgroup

\begin{table}[ht]
\caption{\textbf{Focus retention audit} ($k=5$, the initial cap used in
Table~\ref{tab:main-results} for these settings). Dropped gradient norm is the
share of summed per-turn norms, $\sum_t\|g_t\|$, estimated by the sketch audit.
The last four columns are means for kept and dropped turns.}
\label{tab:dropped}
\centering
\small
\setlength{\tabcolsep}{5pt}
\lighttablestyle
\begin{tabular*}{\linewidth}{@{\extracolsep{\fill}}lrrrrrr@{}}
\toprule
& \multicolumn{2}{c}{\textbf{Dropped share}}
& \multicolumn{2}{c}{\textbf{Turn depth}} & \multicolumn{2}{c}{\textbf{Score/token}} \\
\cmidrule(lr){2-3}\cmidrule(lr){4-5}\cmidrule(lr){6-7}
\textbf{Setting} & Turns & Grad.\ norm & Kept & Dropped & Kept & Dropped \\
\midrule
ScienceWorld 3B & 55\% & 40\% & 4.7 & 9.5 & 0.30 & 0.16 \\
WebShop 3B      & 18\% & 12\% & 2.3 & 4.5 & 0.16 & 0.08 \\
\bottomrule
\end{tabular*}
\end{table}

\subsection{Rollout utilization under a staleness budget}
\label{app:utilization}

\begin{table}[ht]
\caption{\textbf{Cumulative rollout utilization.}
Never selected is the share of valid rollouts contributing no trained turn;
Gini measures concentration of selected-turn counts. ALF/SciW/WS denote
ALFWorld/ScienceWorld/WebShop. The separate-buffer result is grouped below
the shared-buffer comparisons and is not a matched control.}
\label{tab:utilization-full}
\centering\small
\setlength{\tabcolsep}{5pt}
\lighttablestyle
\begin{tabular*}{\linewidth}{@{\extracolsep{\fill}}lrrrrrr@{}}
\toprule
& \multicolumn{3}{c}{\textbf{Never selected (\%)}} & \multicolumn{3}{c}{\textbf{Gini}} \\
\cmidrule(lr){2-4}\cmidrule(lr){5-7}
\textbf{Composer} & ALF & SciW & WS & ALF & SciW & WS \\
\midrule
\multicolumn{7}{l}{\textit{Shared buffers: first 100 updates}} \\
Vanilla OPD (arrival-order) & 53.9 & 49.2 & 58.7 & 0.66 & 0.68 & 0.64 \\
Top-$B$ (no rollout limit)  & --   & --   & --   & 0.44 & 0.45 & 0.27 \\
DivOPD-focus                & 15.0 & 10.2 &  3.4 & 0.33 & 0.32 & 0.20 \\
DivOPD-base / DivOPD        & 14.2 & 11.0 &  7.1 & 0.30 & 0.32 & 0.12 \\
\midrule
\multicolumn{7}{l}{\textit{Separate buffers: full 250 updates}} \\
DivOPD, $k=\infty$ & 56.6 & 67.0 & 39.9 & 0.70 & 0.79 & 0.51 \\
\bottomrule
\end{tabular*}
\end{table}

Top-$B$ ranks all valid turns by $s(u)$ without a per-rollout limit.
DivOPD-base and DivOPD coincide because focus does not change rollout allocation.
The separate 250-update buffers were generated by $k=\infty$ runs; this replay
is a qualitative access diagnostic, not the uniform-within-rollout,
matched-access training control in Table~\ref{tab:matched-access-cap}.

Consider a group of $R$ rollouts that remains eligible for $L$ updates.
If each update touches at most $M$ distinct rollouts, at most
$\min(R,LM)$ members of that group can be selected. This is a group-level
upper bound, not a steady-state utilization estimate: other arrival cohorts
compete for the same slots, and a rollout can be selected repeatedly.
An arrival-order reader consuming consecutive rollout blocks touches roughly
$B/\bar T$ rollouts per update, where $\bar T$ is mean rollout length.
The first-pass cap instead covers at least $\lceil B/k\rceil$ groups when
enough valid groups are available. It therefore increases opportunities for
rollout participation, while actual lifetime coverage must be measured across
updates. Table~\ref{tab:utilization-full} provides that measurement. Every full
composer batch still contains $B=64$ rows; the cap redistributes them across
rollouts rather than increasing the row budget.

\paragraph{Effect of a wider staleness window.}
Relative to the main vanilla references with
$\mathrm{max\_staleness}=2$, two auxiliary runs with a limit of 4 leave
row-level consumption unchanged and consume rows $1.3$--$1.5$ versions older
(Table~\ref{tab:staleness}). Peak SR is lower by $6.4$ points on ALFWorld and
$0.4$ points on ScienceWorld, while the mean current-versus-rollout
probability gap is $10$--$15\%$ larger.

\begin{table}[t]
\caption{\textbf{Vanilla OPD with a wider staleness window.} Mean age of
consumed rows in policy versions (arrival-order replay, dead rows included),
peak and final-checkpoint SR (\%). The $\mathrm{max\_staleness}=2$ rows reuse the main
vanilla runs in Table~\ref{tab:main-results}; the wider-window rows are
separate auxiliary runs rather than paired reruns.}
\label{tab:staleness}
\centering\small
\lighttablestyle
\begin{tabular*}{\linewidth}{@{\extracolsep{\fill}}llrrr@{}}
\toprule
\textbf{Setting} & $\mathrm{max\_staleness}$ & \textbf{Mean age} & \textbf{Peak} & \textbf{Final ckpt.} \\
\midrule
ALFWorld 1.7B     & 2 & 1.69 & 77.86 & 73.57 \\
                  & 4 & 3.19 & 71.43 & 71.43 \\
ScienceWorld 1.5B & 2 & 1.68 & 66.02 & 66.02 \\
                  & 4 & 2.96 & 65.62 & 60.16 \\
\bottomrule
\end{tabular*}
\end{table}

\paragraph{Sensitivity to bounded policy lag.}
The usual reverse-KL gradient identity is exact when samples are drawn from
the current learner policy. In our asynchronous implementation, trajectories
may instead be generated by a behavior policy up to two versions behind the
learner, so the executed update is a bounded-lag approximation. Across the six
main settings, rollout and learner action probabilities remain highly
correlated ($0.956$--$0.989$), with mean absolute differences of only
$0.011$--$0.015$. As an additional offline stress test, recomputing DivOPD
scores at later checkpoints gives cached--current Spearman correlations of
$0.984$ on ScienceWorld and $0.997$ on WebShop; the corresponding
within-trajectory top-$k$ selection overlaps are $93.5\%$ ($k=4$) and
$99.6\%$ ($k=5$). In these tests, recomputing the scores within the
two-version lag window rarely changes which turns are selected. This supports
cached ranking in the tested setting, not an unbiased-gradient claim.

\subsection{Gradient diagnostics}
\label{app:geometry}

We replay every composer on the same candidate pools over the last eight
updates of each buffer. To compare per-turn gradients without storing them
in full, we use a Kronecker projection with unbiased inner products.
Table~\ref{tab:geometry} reports $\langle G,\hat G\rangle/\|G\|^2$ and
$\cos(G,\hat G)$, where $G$ is the pool gradient and $\hat G$ is the selected
batch gradient. Focus raises the first metric over uniform within-rollout
sampling in all 24 audited updates; its mean cosine is also higher in all
three environments.

The proxy measures alignment with the specified candidate pool, not an
optimal policy-improvement direction. It is a pre-optimizer diagnostic;
training outcomes are tested separately in Table~\ref{tab:main-results}.
Mean pairwise gradient cosine is $0.003$--$0.02$ for every composer; reduced
gradient correlation is therefore not the main change observed here.

Selecting turns by alignment in the same sketch space gives an optimistic
reference, with a proxy $1.3$--$2.1\times$ that of the best $s(u)$ rule in each
environment. Using a
rollout-mean pool gradient leaves the rankings unchanged. All these comparisons
use raw gradients, before optimizer preconditioning.

\begin{table}[ht]
\caption{\textbf{Selected gradients versus the candidate-pool gradient.}
Means over 8 audited updates per environment. Projection is
$\langle G,\hat G\rangle/\|G\|^2$; cosine similarity is $\cos(G,\hat G)$.
ALF/SciW/WS denote ALFWorld/ScienceWorld/WebShop.
$^\dagger$Arrival order here draws from the
finite-$k$ composer's pool and is not vanilla OPD. Sketch oracle selects and
evaluates the $B$ turns with the largest sketched $\langle G,g_t\rangle$, so it
is an optimistic reference rather than a full-gradient oracle.}
\label{tab:geometry}
\centering\small
\setlength{\tabcolsep}{4.5pt}
\lighttablestyle
\begin{tabular*}{\linewidth}{@{\extracolsep{\fill}}lrrrrrr@{}}
\toprule
& \multicolumn{3}{c}{\textbf{Gradient projection}} & \multicolumn{3}{c}{\textbf{Cosine similarity}} \\
\cmidrule(lr){2-4}\cmidrule(lr){5-7}
\textbf{Composer} & ALF & SciW & WS & ALF & SciW & WS \\
\midrule
Arrival order$^\dagger$ & 1.74 & 0.75 & 1.32 & 0.97 & 0.49 & 0.61 \\
Gate only                & 1.11 & 1.02 & 1.29 & 0.93 & 0.53 & 0.62 \\
DivOPD-base             & 1.36 & 0.97 & 1.31 & 0.96 & 0.54 & 0.62 \\
DivOPD                  & 2.04 & 2.01 & 1.47 & 0.98 & 0.75 & 0.64 \\
DivOPD-focus             & 3.12 & 1.66 & 1.50 & 0.99 & 0.67 & 0.65 \\
Top-$B$ ($k=\infty$)    & 2.97 & 2.30 & 1.61 & 0.99 & 0.79 & 0.68 \\
Sketch oracle            & 4.19 & 3.70 & 3.30 & 1.00 & 0.91 & 0.84 \\
\bottomrule
\end{tabular*}
\end{table}

\subsection{Which turns each stage selects}
\label{app:replay-audit}

Table~\ref{tab:replay-depth-outcome} follows the ALFWorld 1.7B DivOPD ($k=4$) buffer through the
selection chain. ``Vanilla'' reconstructs the arrival-order
reader (one batch per update in queue order with the staleness filter), including
invalid slots; ``Gate'' keeps that order but takes the first $B$ valid rows; ``Cover'' and ``Focus'' add the remaining
stages. Queue row ID provides a consistent arrival-order proxy for this
controlled replay.

\begin{table}[!htbp]
\caption{\textbf{Where selected turns come from in the ALFWorld 1.7B DivOPD replay.}
Share of selected turns (\%), grouped by turn depth and rollout outcome.
Each method column sums to approximately $100\%$ after rounding;
stages are added one at a time from left to right.}
\label{tab:replay-depth-outcome}
\centering\small
\setlength{\tabcolsep}{5.0pt}
\lighttablestyle
\begin{tabular*}{\linewidth}{@{\extracolsep{\fill}}llrrrr@{}}
\toprule
\textbf{Depth} & \textbf{Outcome} & \textbf{Vanilla} & \textbf{$+$Gate} & \textbf{$+$Coverage} & \textbf{$+$Focus} \\
\midrule
\multirow{2}{*}{$<2$} & Failure & 3.2 & 3.8 & 6.1 & 10.6 \\
& Success & 17.6 & 20.7 & 14.3 & 15.8 \\
\addlinespace[2pt]
\multirow{2}{*}{$[2,4)$} & Failure & 3.1 & 3.6 & 6.1 & 8.5 \\
& Success & 17.4 & 20.4 & 14.1 & 14.5 \\
\addlinespace[2pt]
\multirow{2}{*}{$[4,6)$} & Failure & 2.9 & 3.4 & 5.9 & 7.4 \\
& Success & 13.1 & 15.3 & 9.7 & 9.7 \\
\addlinespace[2pt]
\multirow{2}{*}{$[6,10)$} & Failure & 5.6 & 6.6 & 11.9 & 11.9 \\
& Success & 11.4 & 13.4 & 8.5 & 7.2 \\
\addlinespace[2pt]
\multirow{2}{*}{$\geq10$} & Failure & 23.1 & 9.9 & 20.7 & 12.4 \\
& Success & 2.6 & 3.0 & 2.7 & 2.1 \\
\bottomrule
\end{tabular*}
\end{table}

The gate removes unscored turns late in failed rollouts. Coverage retains
other valid turns from those rollouts for later updates, while focus favors
earlier, higher-disagreement turns within them. Coverage and focus select the
same fraction of turns from successful rollouts ($49.3\%$), so their difference
is not explained by selecting more successes. The
arrival-order reader, meanwhile, consumes $62.1\%$ successful turns against
$22.8\%$ in the pool. This is consistent with shorter successful rollouts
finishing and entering the queue earlier. On vanilla OPD's own ALFWorld 1.7B buffer, the same
reader selects $48.1\%$ successful turns, and its $19.5\%$ dead-slot fraction
matches the online trainer log.

\FloatBarrier
\section{Where the Time Goes}
\label{app:cost}

We measure trained tokens, learner GPU time, and elapsed trainer-loop time
separately. The first two quantify learner-side efficiency; the third captures
elapsed time including experience reads, not total system GPU consumption.
Teacher scoring is performed for generated turns whether or not they are
selected. Recovery adds teacher generation, which is excluded from the
reported $+$R learner-GPU speedup. These measurements do not provide a
full accounting of explorer and teacher GPU time or queue memory.

\begin{table}[!htbp]
\caption{\textbf{Resource efficiency to $\tau$.} Vanilla-to-DivOPD ratios;
larger is better. Trainer GPU time uses the two GPUs assigned to the learner.
The wall estimate sums trainer-step and experience-read time. Its speedup differs
from timestamp-based end-to-end measurement by at most $1.2\%$ on the three
settings whose timestamps were retained.}
\label{tab:resource-efficiency}
\centering\small
\setlength{\tabcolsep}{4.0pt}
\lighttablestyle
\begin{tabular*}{\linewidth}{@{\extracolsep{\fill}}lrrr@{}}
\toprule
\textbf{Setting} & \textbf{Trained tokens} & \textbf{Trainer GPU} & \textbf{Wall time (est.)} \\
\midrule
ALFWorld 4B       & 1.92$\times$ & 1.94$\times$ & 1.36$\times$ \\
ALFWorld 1.7B     & 2.05$\times$ & 2.07$\times$ & 1.28$\times$ \\
WebShop 3B        & 2.22$\times$ & 2.22$\times$ & 0.99$\times$ \\
WebShop 7B        & 1.77$\times$ & 1.92$\times$ & 0.83$\times$ \\
ScienceWorld 3B   & 1.62$\times$ & 1.63$\times$ & 1.22$\times$ \\
ScienceWorld 1.5B & 1.54$\times$ & 1.54$\times$ & 1.27$\times$ \\
\midrule
Geometric mean    & \textbf{1.84$\times$} & \textbf{1.87$\times$} & \textbf{1.14$\times$} \\
\bottomrule
\end{tabular*}
\end{table}

All six settings improve in trained-token and trainer-GPU efficiency.

\begin{figure}[ht]
\centering
\includegraphics[width=0.96\linewidth]{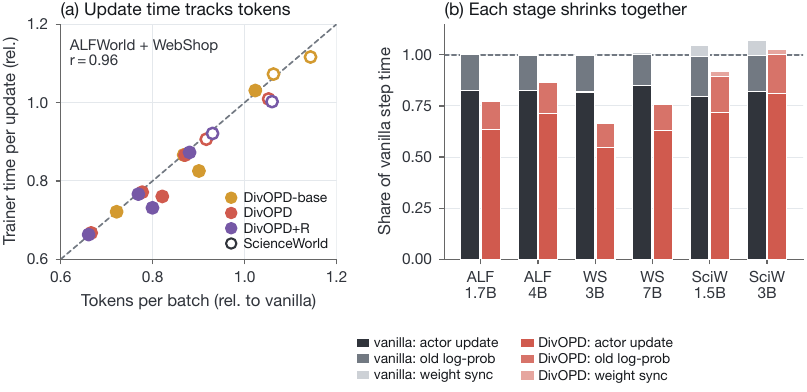}
\caption{\textbf{Cheaper updates come from fewer tokens.} (a) Relative trainer
time versus relative tokens per batch; hollow ScienceWorld points use the
trainer clock only here. (b) The stage-wise time breakdown.}
\label{fig:step-cost}
\end{figure}

\textbf{Fewer tokens make updates cheaper.} Trainer time closely tracks batch tokens on ALFWorld and
WebShop (Pearson $r=0.96$), while model FLOP utilization is nearly unchanged.
The gate removes long rows with truncated prompts, and the first-pass cap limits repeated turns
from long rollouts, reducing time across trainer stages
(Figure~\ref{fig:step-cost}).

\textbf{ScienceWorld gains come from fewer updates.} Because its turns are
valid and focus favors longer responses, its trainer-GPU improvement arises
primarily from reaching the target in fewer updates rather than from shorter
individual steps.

\textbf{Teacher-side cost.} The explorer scores every rollout turn, but many
scored tokens never enter a learner batch (Table~\ref{tab:teacher-cost}).
The explorer produces turns faster than the learner consumes them, and the
staleness rule limits how long excess turns can wait in the buffer.

\begin{table}[!htbp]
\caption{\textbf{Teacher tokens scored per token trained.} Effective tokens;
``not selected'' is the share never entering an optimizer batch. This audit
uses the $k=3$ runs from Table~\ref{tab:k-sweep}, not the per-setting caps of
Table~\ref{tab:main-results}. $^*$Two parallel 150-update trainer histories
sharing one buffer, merged.}
\label{tab:teacher-cost}
\centering\small
\setlength{\tabcolsep}{5pt}
\lighttablestyle
\begin{tabular*}{\linewidth}{@{\extracolsep{\fill}}lrrrr@{}}
\toprule
& \multicolumn{2}{c}{\textbf{DivOPD} ($k{=}3$)} & \multicolumn{2}{c}{\textbf{DivOPD-focus}} \\
\cmidrule(lr){2-3}\cmidrule(lr){4-5}
\textbf{Setting} & scored/trained & not selected & scored/trained & not selected \\
\midrule
ALFWorld 1.7B     & 2.77$\times$ & 63.9\% & 2.89$\times$ & 65.4\% \\
ALFWorld 4B       & 2.73$\times$ & 63.4\% & 2.93$\times$ & 65.9\% \\
ScienceWorld 1.5B & 3.66$\times$ & 72.6\% & 3.79$\times$ & 73.6\% \\
ScienceWorld 3B   & 3.41$\times$ & 70.6\% & 3.60$\times^*$ & 72.2\% \\
WebShop 3B        & 1.89$\times$ & 47.1\% & 1.86$\times$ & 46.3\% \\
WebShop 7B        & 1.95$\times$ & 48.7\% & 1.97$\times$ & 49.3\% \\
\midrule
Token-weighted    & 2.44$\times$ & 59.1\% & 2.60$\times$ & 61.6\% \\
\bottomrule
\end{tabular*}
\end{table}

\textbf{Selection overhead.} The composer sorts at most $cB=256$ rows and
adds no GPU work. As learner updates become faster, rollout generation can
limit further wall-time gains.

\FloatBarrier
\section{Additional Ablations and Teacher Recovery}
\label{app:ablation}

This appendix compares the method variants, tests the initial cap, and
details how teacher recovery extends DivOPD to no-progress rollouts.

\begin{table}[!htbp]
\caption{\textbf{Matched candidate access.} Both arms draw from the same
$256$-turn stream; vanilla otherwise reads in arrival order. Best-5 averages
the five best checkpoints; Final is SR at the common learner-version cutoff.
These separate runs are not numerically comparable to Table~\ref{tab:main-results}.}
\label{tab:access-control}
\centering
\fontsize{8.3}{9.2}\selectfont
\setlength{\tabcolsep}{4.0pt}
\lighttablestyle
\begin{tabular*}{\linewidth}{@{\extracolsep{\fill}}llrrrrrr@{}}
\toprule
& & \multicolumn{3}{c}{\textbf{Vanilla reader}}
& \multicolumn{3}{c}{\textbf{Rollout-first selection}} \\
\cmidrule(lr){3-5}\cmidrule(lr){6-8}
\textbf{Environment} & \textbf{Model}
& \textbf{Peak}$\uparrow$ & \textbf{Best-5}$\uparrow$ & \textbf{Final}$\uparrow$
& \textbf{Peak}$\uparrow$ & \textbf{Best-5}$\uparrow$ & \textbf{Final}$\uparrow$ \\
\midrule
ALFWorld     & 1.7B & 78.57 & 73.86 & \textbf{78.57} & \textbf{80.00} & \textbf{77.71} & \textbf{78.57} \\
WebShop      & 3B   & 75.00 & 68.44 & \textbf{71.09} & \textbf{78.12} & \textbf{74.06} & 68.75 \\
ScienceWorld & 1.5B & 73.44 & 72.66 & 73.05 & \textbf{77.73} & \textbf{74.38} & \textbf{74.61} \\
\bottomrule
\end{tabular*}
\end{table}

\begin{table}[!htbp]
\caption{\textbf{Matched-access cap comparison.} Both arms use the same
candidate-pool size, gate, ordering, pending rule, and within-rollout sampling;
only the initial cap differs. SR is in percent.}
\label{tab:matched-access-cap}
\centering
\fontsize{8.3}{9.2}\selectfont
\setlength{\tabcolsep}{4.0pt}
\lighttablestyle
\begin{tabular*}{\linewidth}{@{\extracolsep{\fill}}llrrrr@{}}
\toprule
& & \multicolumn{2}{c}{\textbf{Finite $k$ (DivOPD-base)}}
& \multicolumn{2}{c}{\textbf{$k=\infty$ (no cap)}} \\
\cmidrule(lr){3-4}\cmidrule(lr){5-6}
\textbf{Environment} & \textbf{Model}
& \textbf{Peak SR}$\uparrow$ & \textbf{Final-5 SR}$\uparrow$
& \textbf{Peak SR}$\uparrow$ & \textbf{Final-5 SR}$\uparrow$ \\
\midrule
ALFWorld     & 1.7B & 74.29 & 72.86 & 73.57 & 65.14 \\
ALFWorld     & 4B   & 92.14 & 88.86 & 90.00 & 87.71 \\
WebShop      & 3B   & 78.12 & 73.12 & 75.00 & 65.47 \\
WebShop      & 7B   & 88.28 & 78.44 & 87.62 & 81.09 \\
ScienceWorld & 1.5B & 70.31 & 67.50 & 67.97 & 62.89 \\
ScienceWorld & 3B   & 86.72 & 82.66 & 83.33 & 80.67 \\
\bottomrule
\end{tabular*}
\end{table}

\textbf{Recovery protocol.} Following the use of teacher intervention in
online imitation learning~\citep{ross2011reduction}, DivOPD$+$R invokes the
teacher after $p$ consecutive no-progress turns: an invalid action, or a
repeated action with an unchanged set of allowed actions. The teacher acts for at most
four to six turns per rollout. The patience/cap/warmup triplets (warmup in learner
updates) are ALFWorld 1.7B: $4/6/0$, 4B: $6/6/0$; WebShop 3B: $4/5/45$,
7B: $4/4/45$; and ScienceWorld 1.5B: $2/6/120$, 3B: $2/6/50$.
Teacher actions are excluded from the distillation loss, and the student resumes
interaction from the resulting state. Recovery is disabled at evaluation.

\begin{figure}[ht]
\centering
\includegraphics[width=0.96\textwidth]{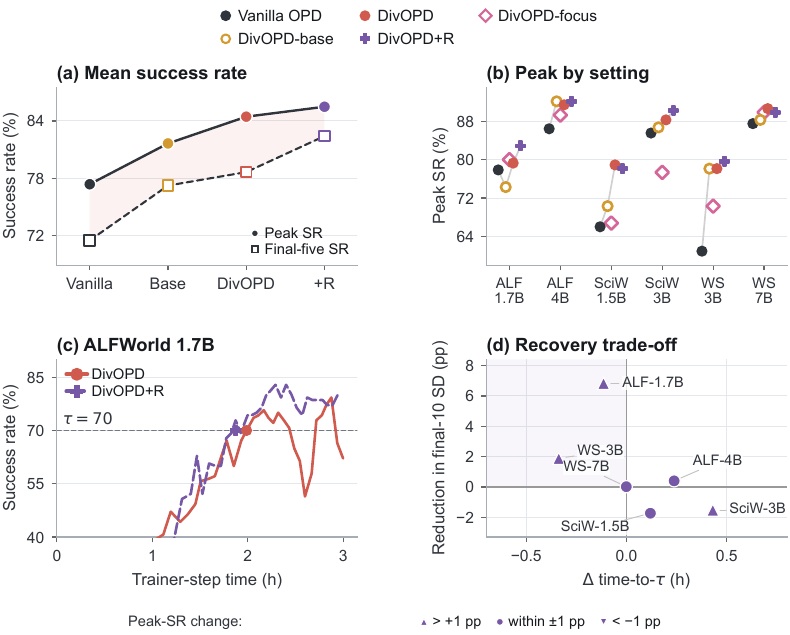}
\caption{\textbf{Variant chain and optional recovery.}
(a--b) Base combines gating, expanded access, and coverage; hollow diamonds
show the separate focus-only reference. (c) Recovery on ALFWorld 1.7B.
(d) Per-setting recovery effects, with marker shape encoding peak-SR change.
Only the controls in Figure~\ref{fig:component-controls} isolate individual
selection components.}
\label{fig:ablation-recovery}
\end{figure}

Table~\ref{tab:matched-access-cap} isolates the first-pass
cap under matched candidate access. Table~\ref{tab:k-sweep} complements that
comparison by varying the finite cap in DivOPD, with focus enabled.

\begin{table}[!htbp]
\caption{\textbf{Sensitivity to the initial per-rollout cap $k$.} Peak SR (\%);
TCOD-F2B is a reference. Best DivOPD values are highlighted, and second-best
values are underlined.}
\label{tab:k-sweep}
\centering
\fontsize{8.1}{8.8}\selectfont
\setlength{\tabcolsep}{4.2pt}
\renewcommand{\arraystretch}{0.98}
\begin{tabular*}{\linewidth}{@{\extracolsep{\fill}}lrrrrrr@{}}
\toprule
\textbf{Method}
& \multicolumn{2}{c}{\textbf{ALFWorld}}
& \multicolumn{2}{c}{\textbf{WebShop}}
& \multicolumn{2}{c}{\textbf{ScienceWorld}} \\
\cmidrule(lr){2-3}\cmidrule(lr){4-5}\cmidrule(lr){6-7}
& \textbf{1.7B} & \textbf{4B}
& \textbf{3B} & \textbf{7B}
& \textbf{1.5B} & \textbf{3B} \\
\midrule
TCOD-F2B
& 72.86 & 87.14 & 67.19 & 89.06 & 71.88 & 87.89 \\
\specialrule{0.35pt}{2.5pt}{0.8pt}
DivOPD ($k=3$)
& \tbest{87.14} & \snd{90.00} & 71.09 & \tbest{91.41} & \snd{77.73} & 85.55 \\
DivOPD ($k=4$)
& 79.29 & \tbest{91.43} & \snd{73.44} & \snd{90.62} & \tbest{78.91} & \snd{86.33} \\
DivOPD ($k=5$)
& 79.29 & \tbest{91.43} & \tbest{78.12} & 88.28 & 75.78 & \tbest{88.28} \\
DivOPD ($k=6$)
& \snd{80.71} & 88.57 & \snd{73.44} & 86.72 & 75.78 & 85.55 \\
\bottomrule
\end{tabular*}
\end{table}

\end{document}